\newtheorem{thm}{Theorem}
\newtheorem{definition}{Definition}
\newtheorem{proposition}{Proposition}
\newtheorem{corollary}{Corollary}
\title[Towards Competitive Classifiers for Unbalanced Classification Problems]
{Towards Competitive Classifiers for Unbalanced Classification Problems: A Study on the Performance Scores}
\author{Jonathan~Ortigosa-Hern\'andez}
\author{I\~naki~Inza}
\author{~Jose~A.~Lozano}
\thanks{{\it Institutions.} The authors are with the Intelligent Systems Group at the Department
of Computer Science and Artificial Intelligence, Computer Science Faculty, The University of the Basque Country UPV/EHU, Donostia-San Sebasti\'an, Spain.Moreover, Jose A. Lozano is also with the Basque Center for Applied Mathematics BCAM, Bilbao, Spain.\\
{\it Corresponding author.} jonathan.ortigosa@ehu.es}
\date{}
\keywords{Supervised Classification, Class-Imbalance Problem, Bayes Decision Rule, Multi-class Problems, Performance Assessment, H\"{o}lder Means.}
\begin{document}
\setlength{\baselineskip}{4.5mm}

\begin{abstract}
Although a great methodological effort has been invested in proposing competitive solutions to the class-imbalance problem, little effort has been made in pursuing a theoretical understanding of this matter.

In order to shed some light on this topic, we perform, through a novel framework, an exhaustive analysis of the adequateness of the most commonly used performance scores to assess this complex scenario. We conclude that using unweighted H\"older means with exponent $p \leq 1$ to average the recalls of all the classes produces adequate scores which are capable of determining whether a classifier is competitive.

Then, we review the major solutions presented in the class-imbalance literature. Since any learning task can be defined as an optimisation problem where a loss function, usually connected to a particular score, is minimised, our goal, here, is to find whether the learning tasks found in the literature are also oriented to maximise the previously detected adequate scores. We conclude that they usually maximise the unweighted H\"older mean with $p = 1$ (a-mean).

Finally, we provide bounds on the values of the studied performance scores which guarantee a classifier with a higher recall than the random classifier in each and every class.
\end{abstract}

\maketitle


\section{Introduction}\label{sec:intr}

In many classification problems, there are significant differences among the probabilities of the classes, i.e. the probability of a particular example belonging to a certain class. In the literature, this situation is known as the class-imbalance problem \cite{He09} and it is considered to be a major obstacle to building competitive classifiers. By {\it competitive classifiers} we refer to classifiers showing not only a high overall classification error, but also a balance between the prediction powers for both the most and the least probable classes.

Moreover, the investigated class-imbalance problem has somewhat torn apart the conventional approaches to solve classification problems. In traditional supervised learning approaches \cite{Wu07}, classification accuracy is by far the most popular numerical performance score due to its theoretical foundations, its simplicity and its property of being intuitive \cite{Pro98}. On one hand, it has been used to guide learning processes due to the fact that most learning algorithms are designed to asymptotically converge to the behaviour of the Bayes decision rule \cite{You74}, a classifier which is optimal for this score. On the other hand, this performance score has also been broadly used in the literature to evaluate the performance of real-world classifiers \cite{Pro98}. However, in recent years, the research community has noticed that it is not an adequate score for problems with extremely skewed class distributions; the least probable classes have very little impact on the classification accuracy when compared to the most probable classes \cite{Gu09}. When dealing with highly unbalanced problems, this implies that (i) traditional learning approaches maximising the classification accuracy may produce dummy classifiers which always predict the most probable classes \cite{Dru05}, and that (ii) this performance score is no longer convenient for assessing real-world classifiers since a high value does not guarantee a fair prediction power for the underrepresented classes.

In view of this puzzling situation, there has been significant methodological effort invested not only in determining which performance scores are the most appropriate to the class-imbalance scenario \cite{Gu09}, but also in proposing new learning systems to obtain competitive classifiers for highly unbalanced classification problems \cite{Fer11}\cite{Wan12}\cite{Lop13}. However, little theoretical effort \cite{Dru05}\cite{Wei04} has been made in pursuing a complete understanding of these topics. Most of the successful proposals are the result of experience, systematic studies \cite{Jap02} or just pure intuition on how special prominence can be given to the least probable classes \cite{Wan12}, rather than being built on a solid theoretical foundation \cite{He09}. To the best of our knowledge, the following questions are still unanswered in the literature:

\begin{enumerate}
\item Which performance scores are adequate to determine the competitiveness of a classifier in unbalanced domains?
\item Which performance scores (loss functions) are maximised (minimised) in the most common learning solutions designed to deal with skewed classes?
\item Can bounds guaranteeing the competitiveness of a classifier be provided for certain adequate performance scores?
\end{enumerate}

\noindent Thus, in order to shed some theoretical light on the previous three questions, in this paper, we perform the following three studies:

{\bfseries A first study on determining the adequate performance scores for class-imbalance scenarios.} Our main objective is to analyse how different performance scores behave under a changing class distribution when the Bayes decision rule is used as a classifier. The usage of this rule is due to the fact that, as it has been long studied in the literature e.g. \cite{Dru05}, there is a complete understanding of the deficiencies of this classifier in this complex scenario. Hence, this knowledge can be exploited in order to obtain a licit characterisation of the adequateness of the performance scores; by having a look at the fluctuation of the values of the performance scores when the class distribution varies, we can devise which scores are the most appropriate to evaluate the classifiers for unbalanced domains. However, having differences among the probabilities of the classes is not the only factor hindering the proposed solutions \cite{Lop13}; other factors such as class-overlapping or the presence of noise data also modify the values of the performance scores. Thus, based on these grounds, we define a novel controlled framework where these other hindering factors can be properly cancelled so that the contribution of the class distribution to the performance scores can be legitimately quantified in isolation. Conclusions show that the performance scores which are unweighted H\"{o}lder means \cite{Bul03} with $p \leq 1$ among the recalls (Section \ref{sec:not}) are the most appropriate to evaluate the competitiveness of classifiers in unbalanced problems. In this regard, misclassifying the least probable classes is highly penalised and this penalisation is increased as the value of $p$ decreases.

{\bfseries A second study on discovering the performance scores maximised in the major solutions of the class-imbalance literature.}  Since it is known that the typical supervised learning techniques are unable to deal with the class-imbalance problem \cite{Dru05} and given the large number of publications reporting positive results \cite{Bat04}\cite{Liu06}, we analyse the most common approaches to the class-imbalance problem by assuming that different performance scores than classification accuracy are maximised in those solutions. Our main conclusion is that most of those proposals output classifiers which are maximised for the unweighted H\"{o}lder mean among the recalls with $p = 1$ (a-mean) and that they have an asymptotic behaviour close to the Bayes decision rule for an equiprobable class distribution (an optimal rule for that performance score). Moreover, we also show that, in scenarios showing skewed class distributions, this rule outperforms the well-known Bayes decision rule in terms of the values of the adequate scores detected in the first study. Yet, the usage of any other unweighted H\"{o}lder mean with $p<1$ to define new learning procedures for dealing with unbalance problems is also appropriate.

{\bfseries A third study on proposing bounds for the performance scores to determine the competitiveness of classifiers in unbalanced domains.} For this purpose, we delimit the definition of competitiveness of a given solution to a classifier having higher recalls than the random classifier in each and every class. Thus, we can provide two different practical bounds for the values of the performance scores expressed as unweighted H\"{o}lder means among the recalls with $p  \in \mathds{R} \cup \{+\infty,-\infty\}$; a bound for the lowest value of the performance score ensuring a competitive solution and a bound for the highest value of the score indicating an incompetent solution, i.e. the random classifier obtains a better recall in at least one class value. Here, our conclusions are also consistent with the first study; since the distance between both bounds decreases along with $p$ (rapidly for $p\leq1$), using H\"{o}lder means with $p\leq1$ is presumed to be adequate for determining the competitiveness of a classifier. In fact, both bounds coincide in the extreme $p=-\infty$.

The rest of the paper is organised as follows: 
First, Section \ref{sec:not} introduces the general definitions and notation. 
Then, Section \ref{sec:st1}, Section \ref{sec:st2}, and Section \ref{sec:st3} expose the above mentioned first, second, and third studies, respectively.
Finally, Section \ref{sec:sum} sums up the paper and discusses the potential lines of future work.

\section{General Definitions and Notation}\label{sec:not}

\subsection{H\"{o}lder Means}

As we make use of the H\"{o}lder means \cite{Bul03}, a.k.a. generalised means or power means, throughout the whole paper, we start by defining them as follows:
\begin{definition} Let $\mathbf{a}=(a_1,\ldots, a_n)$ be a series of $n$ positive real numbers with non-negative weights $\boldsymbol{\zeta}=(\zeta_1,\ldots, \zeta_n)$ s.t. $\sum_i \zeta_i=1$, then, a H\"{o}lder mean is a mean of the form
\begin{equation}
\label{holdermean}
M_p(\mathbf{a},\boldsymbol{\zeta}) = \left(\sum_{i=1}^n\zeta_ia_i^p\right)^{\frac{1}{p}}.
\end{equation}
\noindent Here, $p  \in \mathds{R} \cup \{+\infty,-\infty\}$ is an affinely extended real number. This family of functions has several interesting properties: 
\begin{enumerate}
\item {\bfseries H\"{o}lder mean inequality:} $M_p(\mathbf{a},\boldsymbol{\zeta}) \geq M_q(\mathbf{a},\boldsymbol{\zeta})$ if $p > q$. The equality only holds for the case of $a_i=a_j,\forall i,j$. 
\item {\bfseries Inclusion of the Pythagorean means\footnote{The classical definition of these means can be found in Table \ref{scores}.}:}  The arithmetic mean corresponds to the case $p=1$, the harmonic mean to $p=-1$, and the geometric mean is the limit of mean with an exponent $p$ approaching to $0$, i.e. $\lim_{p \to 0} M_p(\mathbf{a},\boldsymbol{\zeta})$.
\item {\bfseries Extremes:} $\lim_{p \to +\infty} M_p(\mathbf{a},\boldsymbol{\zeta}) = \max \{\zeta_1a_1,\ldots, \zeta_na_n\}$ and $\lim_{p \to -\infty} M_p(\mathbf{a},\boldsymbol{\zeta}) = \min \{\zeta_1a_1,\ldots, \zeta_na_n\}$.
\end{enumerate}

\end{definition}

\subsection{Unbalanced $K$-class Classification Problem}

Let $\gamma_K$ be a $K$-class classification problem with a generative model given by the generalised joint probability density function

\begin{equation}
\rho(\mathbf{x},c|\boldsymbol{\theta}) = p(c)\rho(\mathbf{x}|c,\boldsymbol{\theta}).
\label{genModel}
\end{equation}
\noindent Under the assumption of belonging to a given family of probability distributions, let $\rho(\mathbf{x}|c,\boldsymbol{\theta})$ be the conditional distribution of the feature space and let $\boldsymbol{\theta}=(\boldsymbol{\theta}_1,\boldsymbol{\theta}_2,\ldots,\boldsymbol{\theta}_K)$ stand for the set of the model parameters which unambiguously determine the conditional probability distributions. Here, each $\boldsymbol{\theta}_i$ represents the set of parameters for the distribution of each class $c_i$.
Also, let $p(c)$ be the distribution of the class probabilities. For simplicity of notation, henceforth, we denote $p(c)$ by  $\boldsymbol{\eta}=(\eta_1,\eta_2,\ldots,\eta_K)$, where each $\eta_i=p(c_i)$ is the probability of the categorical class $c_i$. Additionally, for convenience, hereafter, the special case of equiprobability, i.e. $\forall i, \eta_i = 1/K$, is denoted by $\mathbf{e}$. Therefore, according to \cite{He09}, a $K$-class classification problem is balanced if it exhibits a uniform class distribution. Otherwise, it is considered to be unbalanced. Formally, 
$$\gamma_K \textrm{ is balanced }  \Longleftrightarrow \boldsymbol{\eta} = \mathbf{e}.$$
When the model is unbalanced, in the multi-class scenario ($K>2$), we can differentiate two major types of class-imbalance \cite{Wan12}: (i) multi-majority unbalanced problems, i.e. when most of the classes have a higher probability than equiprobability, and (ii) multi-minority unbalanced problems, i.e. when most of the class probabilities are below the equiprobability. Formally, let $\{\mathbb{M},\mathbb{m}\}$ be a partition of the set $\{1,\ldots, K\}$ such that, $\forall{i \in \mathbb{M}}, \eta_i \geq 1/K$ (overrepresented) and $\forall{j \in \mathbb{m}}, \eta_j < 1/K$ (underrepresented), then
\begin{eqnarray*}
\gamma_K \textrm{ is multi-majority } \Longleftrightarrow  |\mathbb{M}| \geq K/2, &\textrm{ and } \gamma_K \textrm{ is multi-minority }\Longleftrightarrow  |\mathbb{m}| > K/2.
\end{eqnarray*}
By means of this definition, it can be presumed that having a balanced scenario is a hard condition to ensure in real-world problems. Here, $\boldsymbol{\eta}$ usually differs from equiprobability. For that reason, it is imperative to theoretically study not only the impact of the class distribution on the competitiveness of the proposed solutions and which performance scores are able to measure that impact, but also define more adequate learning systems which can effectively learn from highly skewed class distributions, i.e. $\exists i$ s.t. $(\eta_i\sim1 \vee \eta_i\sim0).$

\subsection{Traditional Supervised Learning Approaches}

Solving a $K$-class classification problem, regardless of its imbalance extent, is equivalent to learning a function $\Psi$, known as a classifier, that maps a vector of observations $\mathbf{x}$, drawn from the generative function of eq. (\ref{genModel}), into a categorical class $c_i$. In the supervised learning approach, the learning process is carried out by means of an optimisation algorithm, which provided with some labelled training data, drawn also from eq. (\ref{genModel}), attempts to infer a function $\Psi$ that minimises a certain loss function \cite{Bar06}. Here, most of the traditional learning algorithms use the $0$-$1$ loss or a surrogate loss which, by providing an upper bound for it, is also expected to minimise the $0$-$1$ loss \cite{Kan13}. This loss is also referred to as the misclassification error which can, in fact, be directly calculated by $(1 - \text{classification accuracy})$. This implies that those algorithms inherently maximise the classification accuracy and, therefore, they should asymptotically obtain classifiers close to the Bayes decision rule, a classifier which always obtains the highest classification accuracy for every $K$-class classification problem. In consequence, in this paper, we assume a framework where the generative function is known so that the Bayes decision rule \cite{You74} can be directly used as a representative of the classifiers resulting from the traditional approaches. By means of this approach, we can make use of the knowledge of prior works on the deficiencies of the traditional approaches in solving unbalanced problems \cite{Dru05}\cite{Axe00}\cite{Pra04} to complement our studies on class-imbalance.

\begin{definition} Assuming $\rho(\mathbf{x}|c,\boldsymbol{\theta})$ and $\boldsymbol{\eta}$ to be known, the {\it Bayes decision rule} (\textsf{BDR}) is given by
\begin{equation}
\label{BDR}
 \hat{c}_B = \arg\max_{i}\eta_i\rho(\mathbf{x}|c_i,\boldsymbol{\theta}_i).
\end{equation}
\noindent Here, $\hat{c}_B$ is the categorical class assigned by the \textsf{BDR} to the observation $\mathbf{x}$. This rule has a corresponding probability of error
\begin{equation}
\label{eB}
 e_B = 1 - \sum_{i=1}^K\eta_i\int\limits_{\Omega_i}\rho(\mathbf{x}|c_i,\boldsymbol{\theta}_i)d\mathbf{x}
\end{equation}
\noindent which is called the Bayes error and is the highest lower bound of the probability of error of any classifier. Here,
\begin{equation}
\label{region}
\Omega_i = \{\mathbf{x}: \eta_i\rho(\mathbf{x}|c_i,\boldsymbol{\theta}_i) - \max_{{i'} \neq i}\eta_{i'}\rho(\mathbf{x}|c_{i'},\boldsymbol{\theta}_{i'}) >0\}
\end{equation}
\noindent is the region where $\eta_i\rho(\mathbf{x}|c_i,\boldsymbol{\theta}_i)$ is maximum and, so, the instances are assigned to the class $c_i$ by the \textsf{BDR}, for all $i$.
\end{definition}

\subsection{Performance Scores}

Classifiers often produce misclassifications, and optimal classifiers are not exceptions. Thus, once a classifier is constructed, its associated discerning skill needs to be measured. When the generative function is assumed to be known, a common tool used for visualising the performance of a classifier is the true confusion matrix of a given classifier $\Psi$ \cite{Koc13}. It is a square matrix of size $K$ containing the mathematical expectations of classifying, with a classifier $\Psi$, an example of class $c_i$ (rows) as class $c_j$ (columns). Therefore, formally, the {\it true confusion matrix}\footnote{In most the real world applications, where $\rho(\mathbf{x},c)$ is unknown, an estimation of the true confusion matrix is utilised instead. It is usually referred to as the empirical confusion matrix, or simply as the confusion matrix.}  of the \textsf{BDR} is defined as $\mathbf{A}_{B}=[a_{i,j}]_{1 \leq i,j \leq K}$, where each element $a_{i,j}$ is calculated by:
\begin{equation}
a_{i,j}= \mathbb{E}_{\mathbf{x}|c=c_i} [\hat{c}_{B}=c_j]=\eta_i\int\limits_{\Omega_j}{}\rho(\mathbf{x}|c_i,\boldsymbol{\theta}_i)d\mathbf{x}.
\label{tcmbayes}
\end{equation}
\noindent Here, $\mathbb{E} [\cdot]$ stands for the mathematical expectation and $\Omega_j$ is defined as in eq. (\ref{region}). It can be easily noticed that, assuming that the family of distributions for the feature space is fixed, the calculation of the true confusion matrix depends on only three parameters; the used classifier ($\Psi=B$, the \textsf{BDR} is assumed here), the class distribution ($\boldsymbol{\eta}$) and the parameters of the generative function ($\boldsymbol{\theta}$).

As stated in \cite{Koc13}, the  confusion matrix is one of the most informative performance summaries that a multi-class learning system can rely on. Among other information, it contains how much the classifier is accurate on each class, and the way it tends to get confused among classes. However, it is often tedious not only determining the overall behaviour of a classifier from the confusion matrix, but also comparing among several classifiers.  Therefore, quantity measures which summarise the confusion matrix are often preferred. These measures are known as performance scores \cite{San15}. Since {\it the best behaviour} may vary from one kind of problem to another, there are many different and diverse performance scores in the community. This diversity may, at times, obscure important information on the hypotheses or algorithms into consideration \cite{Gu09}. Thus, it is fundamental to check in advance the adequateness of a performance score for assessing a determined classification problem so that the validity of the obtained results can be ensured. In this paper, we check this issue for the case of the class-imbalance domain: by studying the implication of the class distribution on the ``already known'' behaviour of the \textsf{BDR} as it is perceived by several numerical scores, which performance scores are inadequate for excluding important information about the behaviour of the classifier can be directly determined . 

Formally, we define a numerical performance score\footnote{Numerical performance scores are quantity measures which produce a single number to summarise the true confusion matrix. The graphical performance measures are left out of the scope of this paper.} as $\mathcal{S}_{\Psi}(\boldsymbol{\eta},\boldsymbol{\theta})$. Since it is a summary of the confusion matrix, it also depends on the same parameters $\boldsymbol{\eta}$, $\boldsymbol{\theta}$ and $\Psi=B$. Therefore, the numerical performance scores can be mainly divided into two different groups: local scores, which only focus on the behaviour of one target class, and global scores, which summarise the performance of the classifier taking into account its behaviour in all classes. The list of performance scores considered and their formal definitions can be found in Table \ref{scores}.

We use two well-known scores as {\it local performance scores}: precision $\mathcal{P}^{i}$ and recall $\mathcal{R}^{i}$. Whilst  $\mathcal{P}^{i}$ assesses to what extent the predictions of a certain class $c_i$ are correct, $\mathcal{R}^{i}$ assesses to what extent all examples of a certain class $c_i$ are classified as so. Unfortunately, for their local property, they lose the global picture of the performance of the classifier; they are more useful when combined with other scores or applied to all classes \cite{Gu09}. Therefore, most of the {\it global scores} are just functions which, by taking local performance scores applied to some/all classes, summarise the behaviour of the classifier according to a determined subjective criterion. In these studies, we have selected the global scores principally used or mentioned  in the class-imbalance literature, which can be expressed as H\"{o}lder means \cite{Bul03} among the recalls.
The classification accuracy, $\mathcal{A}cc$, is a weighted H\"{o}lder mean with parameters $p=1$ and $\zeta_i=\eta_i$.  As previously mentioned, the performance on underrepresented classes has very little impact on the measure when compared to the overrepresented classes \cite{Gu09}\cite{Mar13}. 
Due to this, most of the scores for unbalanced learning average over the recalls without weighting these values on the class probabilities. Therefore, all classes, over and underrepresented, share a common consideration in the score. The most-used scores in the class-imbalance literature are the Pythagorean means -- arithmetic ($\mathcal{A}$), geometric ($\mathcal{G}$), and harmonic ($\mathcal{H}$) means  -- over the recalls of the $K$-classes, which can be directly expressed as unweighted H\"{o}lder means with $p=1$, $p=0$ and $p=-1$, respectively. In the literature, they are referred to as a-mean, g-mean, and h-mean, respectively. Moreover, although they are not commonly used in the literature, we will also consider the extreme values of these unweighted H\"{o}lder means ($p=\infty$ and $p=-\infty$). They correspond to the maximum recall, $max$, and the minimum recall, $min$ (among the classes), respectively. We believe that they can also give valuable information in this complex scenario.

\begin{table}[t]
	\begin{center}
  		\small
    			\begin{tabular}{ |c| >{\centering}m{1.4in}|c| c@{}| }
    				\hline
				&Name	&Notation	&Formula \\
				\hline \hline
				\parbox[t]{1mm}{\multirow{2}{*}{\rotatebox[origin=c]{90}{\bf Local scores~\hspace{-\normalbaselineskip}}}} &Precision	&$\mathcal{P}^{i}$		&$\displaystyle a_{i,i}\left(\sum_{j=1}^Ka_{j,i}\right)^{-1}$ \\ \cline{2-4}
				&Recall &$\mathcal{R}^{i}$		&$\displaystyle  a_{i,i}\left(\sum_{j=1}^Ka_{i,j}\right)^{-1}$\\ \hline \hline
				\parbox[t]{1mm}{\multirow{5}{*}{\rotatebox[origin=c]{90}{\bf Global scores~\hspace{-\normalbaselineskip}}}} &Classification accuracy &$\mathcal{A}cc$	&$\displaystyle \sum_{i=1}^K \eta_i \mathcal{R}^{i}$ \\ \cline{2-4}
				&Arithmetic mean among the recalls ({\it a-mean}) &$\mathcal{A}$  &$\displaystyle \sum_{i=1}^K \frac{1}{K}\mathcal{R}^{i}$ \\ \cline{2-4}
				&Geometric mean among the recalls ({\it g-mean}) &$\mathcal{G}$  &$\displaystyle \sqrt[K]{\prod_{i=1}^K \mathcal{R}^{i}}$ \\ \cline{2-4}
				&Harmonic mean among the recalls ({\it h-mean}) &$\mathcal{H}$  &$\displaystyle K\left(\sum_{i=1}^K\frac{1}{\mathcal{R}^{i}}\right)^{-1}$ \\  \cline{2-4}
				&Maximum value among the recalls &$max$  &$\displaystyle \max_i \mathcal{R}^{i}$ \\  \cline{2-4}
				&Minimum value among the recalls &$min$  &$\displaystyle \min_i \mathcal{R}^{i}$ \\  
				 \hline
   			 \end{tabular}
        		\caption{Numerical performance scores (by convention  $0/0=1$).}
		\label{scores}
	\end{center}
\end{table}

In \cite{Men13}, it is stated that the \textsf{BDR} is not an optimal decision rule for $\mathcal{A}$, i.e. a higher value for the score may be obtained with other classifiers. Unfortunately, in the literature, no further information is provided either on which the optimal classifier for $\mathcal{A}$ is or on whether the \textsf{BDR} is optimal for the other two broadly used unweighted H\"{o}lder means ($\mathcal{G}$ and $\mathcal{H}$) or for the extreme means. In the following sections, we also shed light on these questions.

\section{First Study: Adequate Numerical Performance Scores for Unbalanced Problems}\label{sec:st1}

In this section, our goal is to answer {\it which performance scores are adequate to determine the competitiveness of a classifier in unbalanced domains?} Particularly, we want to be able to determine which performance scores succeed in expressing the long-studied performance detriment \cite{Jap02}\cite{Wei13} resulting from learning, in a classical manner, from skewed class distributions.

\subsection{A Novel Framework to Marginalise the Effect of the Class Distribution on the Performance Scores}

\begin{figure}[h] 
\centering
\begin{subfigure}{0.3\textwidth}
  \centering
  \includegraphics[width=.95\linewidth]{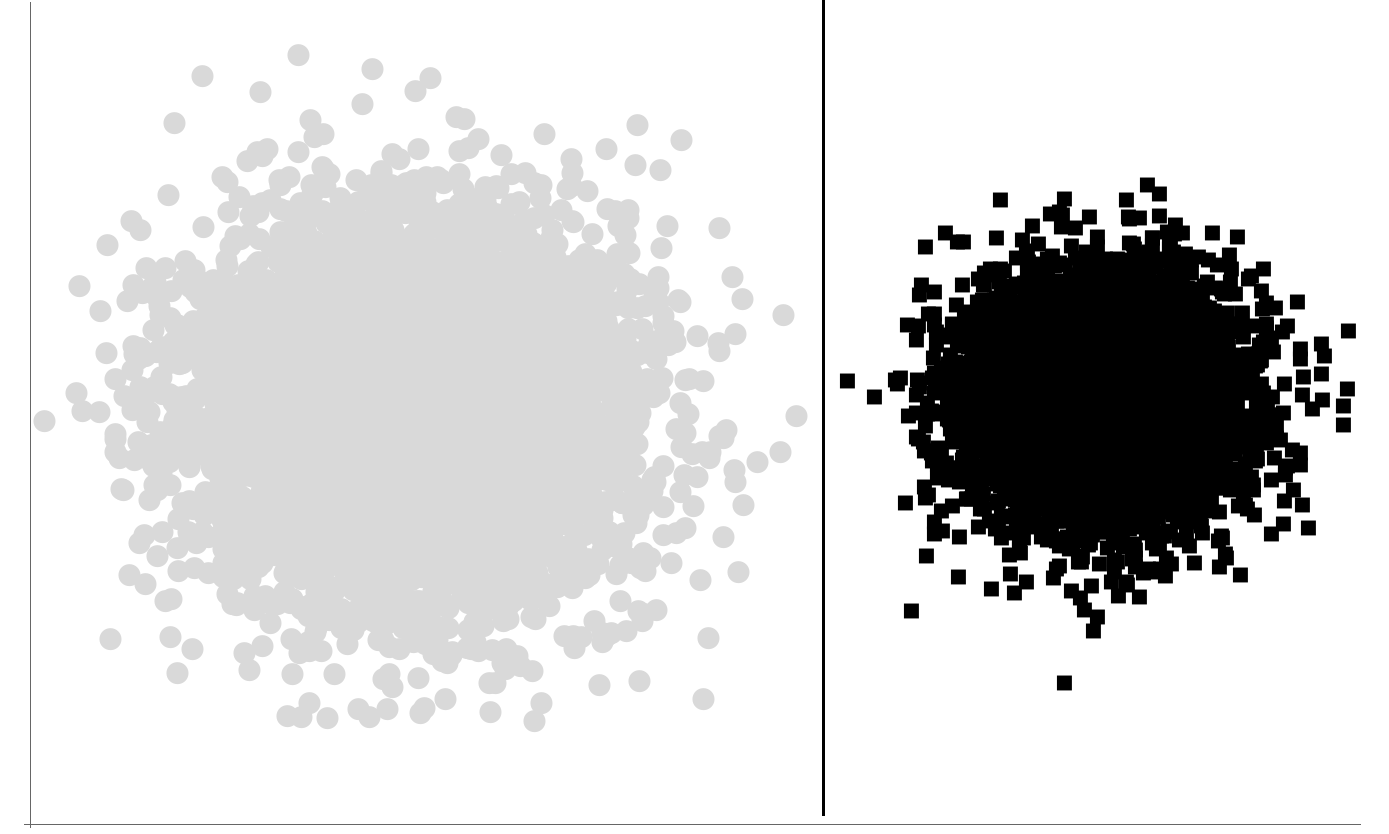}
  \caption{Balanced problem \\with no class-overlapping.}
  \label{subfig:comp1}
\end{subfigure}%
\begin{subfigure}{0.3\textwidth}
  \centering
  \includegraphics[width=.95\linewidth]{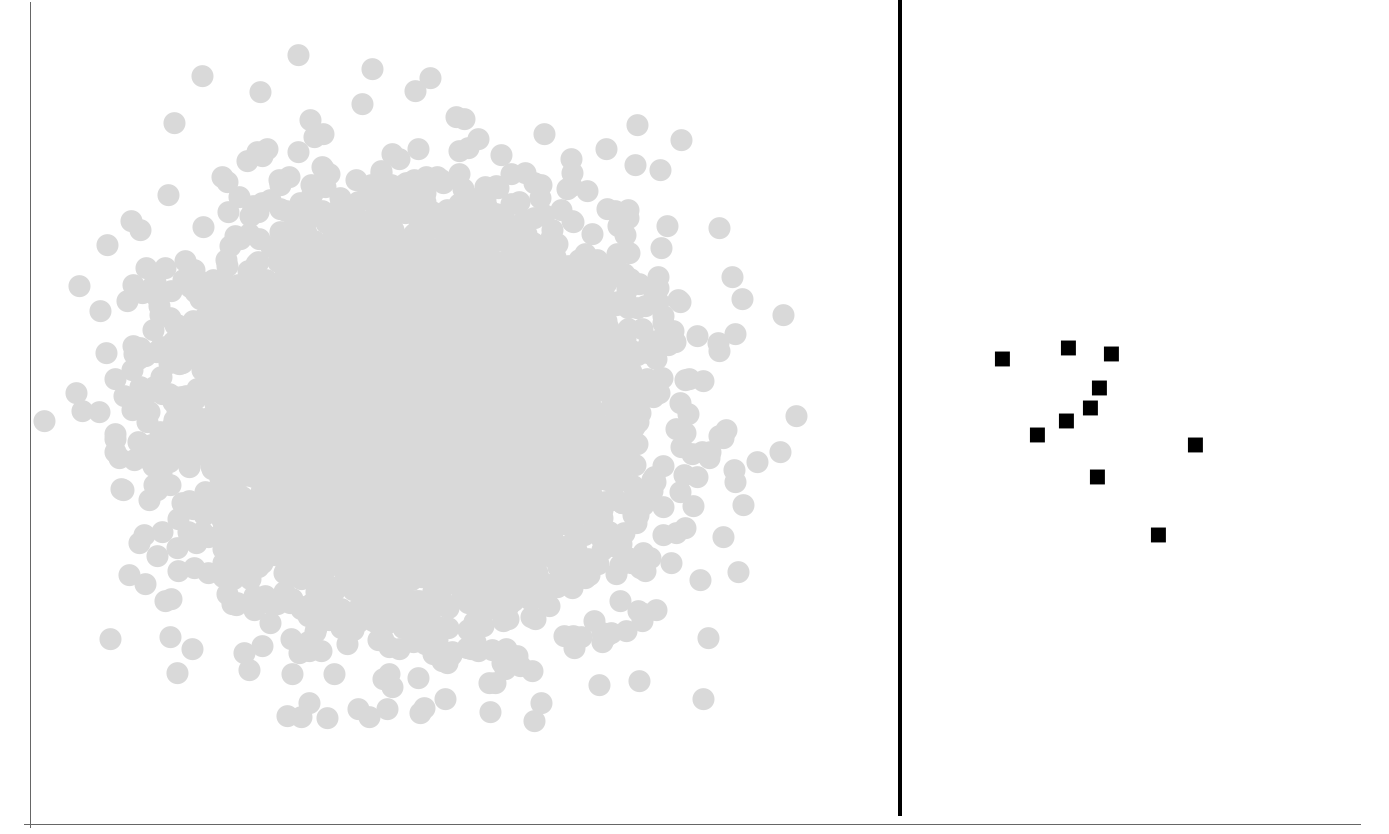}
  \caption{Unbalanced problem \\with no class-overlapping.}
  \label{subfig:comp2}
\end{subfigure}\\
\begin{subfigure}{0.3\textwidth}
  \centering
  \includegraphics[width=.95\linewidth]{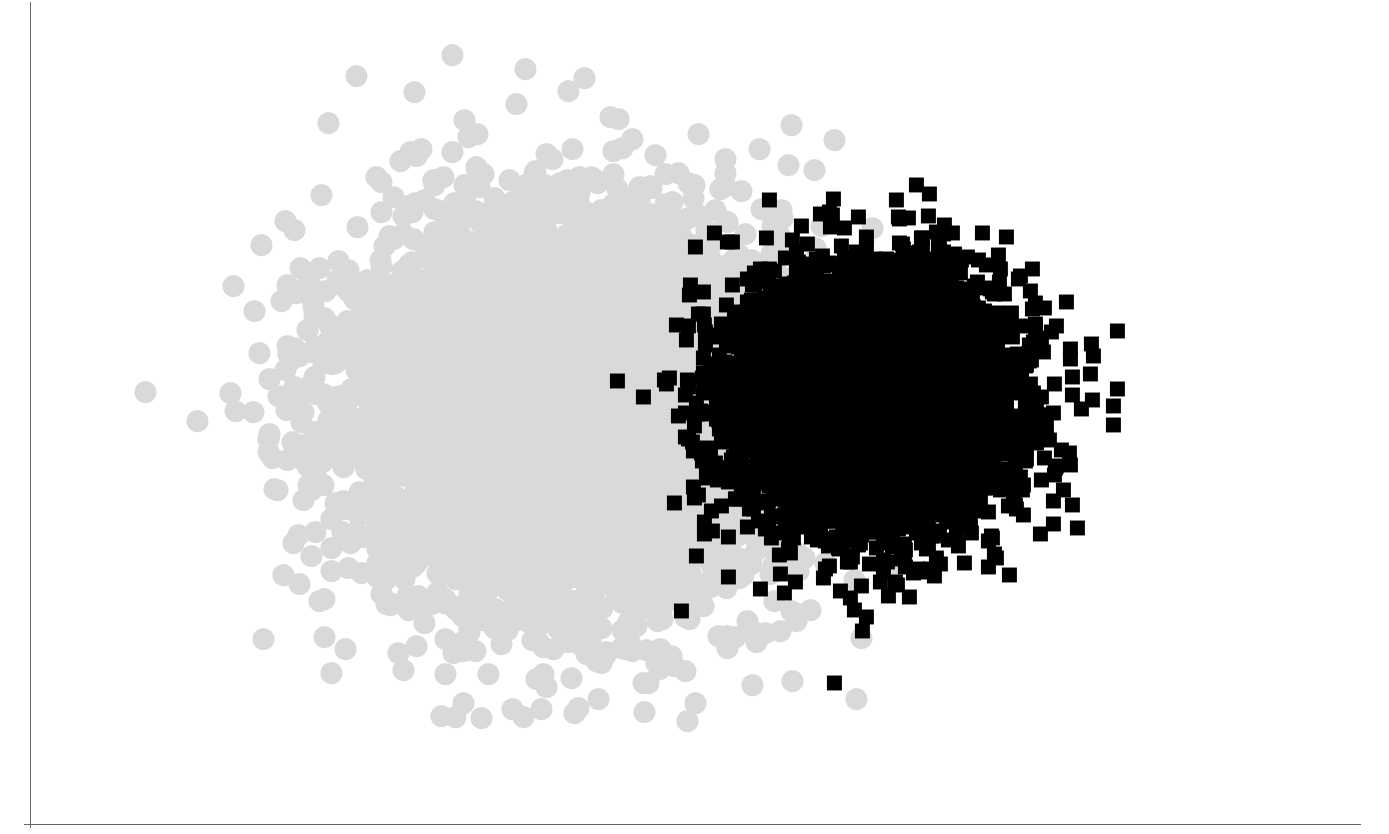}
  \caption{Balanced problem \\with class-overlapping.}
  \label{subfig:comp3}
\end{subfigure}%
\begin{subfigure}{0.3\textwidth}
  \centering
  \includegraphics[width=.95\linewidth]{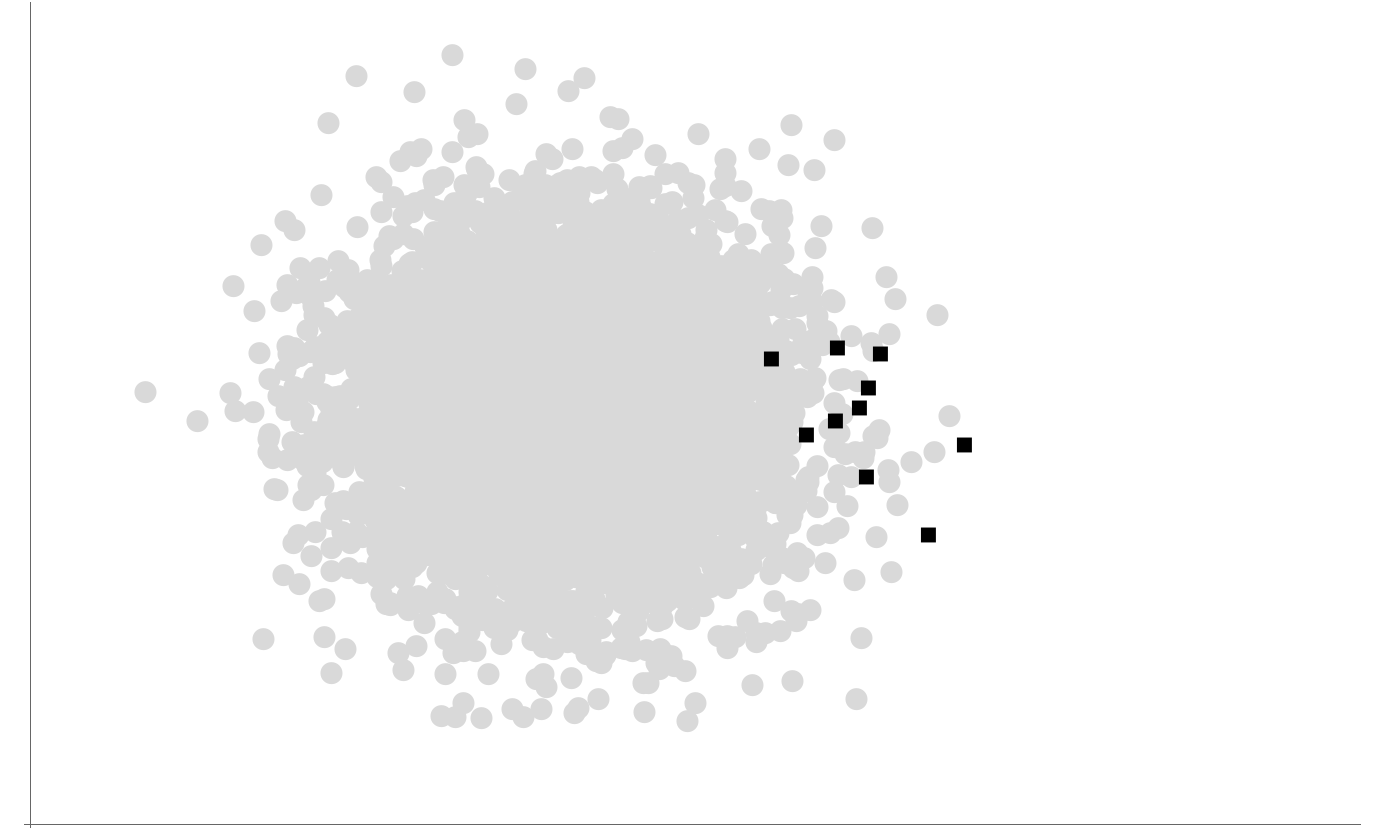}
  \caption{Unbalanced problem \\with class-overlapping.}
  \label{subfig:comp4}
\end{subfigure}
\caption{Relation between class-overlapping and class-imbalance.}
\label{fig:complexity}
\end{figure}

Recently, some authors \cite{Lop13}\cite{Den10} have stated that the class distribution is not the only factor hindering the predictive power of the classifiers. These other factors are listed in \cite{Lop13} as the degree of overlapping among the classes, the training size, the noise in the data, the presence of small disjuncts and the dispersity of some classes, among others. Moreover, these authors also argue that (i) the hindering factors have strong interdependences among them, and that (ii) these interdependences can modify the contribution of the class distribution to the behaviour of the classifier. Thus, it seems that it is not trivial to isolate the implication of the class distribution on the performance of the classifier so that, therefore, the adequateness of the numerical performance scores to capture this implication can be determined. Special care must be taken to marginalise out all the rest of the hindering factors. Fortunately, all of these causes, with the exception of class-overlapping, are only dependent on the nature of the training dataset. Since the \textsf{BDR} only depends on the generative function, this classifier allows us to omit practically all these factors which may modify the real impact that the imbalance extent has on the performance of the classifier. However, the effect of the class-overlapping is harder to eliminate; it depends on the local probability distributions of the feature space \cite{Den10}, i.e. class-overlapping, like class-imbalance, is an intrinsic characteristic of the generative model. Hence, the overlapping-imbalance dependence must be exhaustively studied in order to find a legitimate manner to remove the class-overlapping from this puzzling situation.

\subsubsection{Class-overlapping and Class-imbalance Relationship}

First, we set up an example in order to clarify the aforementioned dependency and how both the class-overlapping and class-imbalance factors may hinder the behaviour of the inferred classifier. Figure \ref{fig:complexity} shows four binary problems sharing two different degrees of class-imbalance and two different degrees of class-overlapping. For each problem, the data has been created by sampling two bivariate Gaussian distributions, one distribution for each class, i.e. $\boldsymbol{\theta}_i=\{\boldsymbol{\mu}_i,\boldsymbol{\Sigma}_i\}, i=\{1,2\}$. To simulate no class-overlapping, we choose two Gaussians whose means are far from each other (Figures \ref{subfig:comp1} and \ref{subfig:comp2}), and to simulate the opposite, we shorten the distance between these means (Figures \ref{subfig:comp3} and \ref{subfig:comp4}). Regarding the class-imbalance setting, it is simulated by sampling $1,000$ instances of each class for balanced problems (Figures \ref{subfig:comp1} and \ref{subfig:comp3}), and by sampling $1,000$ instances for the majority class and only $10$ samples for the minority, for the case of unbalanced problems (Figures \ref{subfig:comp2} and \ref{subfig:comp4}). By just having an overall look at Figure \ref{fig:complexity}, it can be easily noticed how the combination of both factors (class-imbalance and class-overlapping) has a straight effect on the issue of discriminating among the classes and, therefore, on the performance of the inferred classifier. When there is no class-overlapping, as shown in the first row of the figures, the class distribution does not hinder the predictive power of the resulting classifier. In both scenarios, a simple and perfect discriminant linear classifier can be easily drawn. This classifier is represented by a straight continuous line in the figure. However, in the second row, the situation is completely different. When both classes are balanced (Figure \ref{subfig:comp3}), a classifier with a tolerable recall for both classes can be learned. Unfortunately, in the unbalanced case (Figure \ref{subfig:comp4}), the lack of enough examples for the minority class hinders the process of discriminating that class; any intuitively chosen classifier will be incompetent, i.e. it will have a low recall for the minority class. This example concurs with the literature claims \cite{Pra04}, where it is stated that (i) only when the class-overlapping is non-zero, the influence of the class distribution on the competitiveness of the inferred classifier is noticeable, and that (ii) the influence of class-overlapping into the learning process is even stronger than class-imbalance. 

\subsubsection{Isolating Class-imbalance from Class-overlapping}

Whilst class-overlapping is an old stalwart in the literature for being broadly studied \cite{Bas06}, to the best of our knowledge, no prior work in the literature has isolated the impact that the class distribution has on the performance of the classifiers or has explored which performance scores are able to appropriately capture this potential fluctuation of performance. In order to bridge these gaps, we propose a function which, by properly cancelling the effect of the class-overlapping, returns the impact of the class distribution on a chosen numerical performance score for assessing the inferred classifier (here \textsf{BDR}): 

\begin{definition} Let $\boldsymbol{\Theta}$ represent the space of parameters for a fixed family of distributions over the feature space for classification problems with $K$ classes. Let $\mathcal{S}_{B}(\boldsymbol{\eta},\boldsymbol{\theta})$ be the value of a performance score $\mathcal{S}$ assessing the behaviour of the \textsf{BDR} on a $K$-class classification problem, $\gamma_K$, with a class distribution equal to $\boldsymbol{\eta}$ and parameters $\boldsymbol{\theta} \in \boldsymbol{\Theta}$. Also, let $\mathcal{S}_{B}(\boldsymbol{e},\boldsymbol{\theta})$ be the value of $S$ evaluating the \textsf{BDR} inferred from the balanced version of $\gamma_K$. Therefore, the {\it influence function}, $\mathds{I}^{\mathcal{S}}_K(\boldsymbol{\eta})$, of the $K$-class distribution $\boldsymbol{\eta}$ on the performance score\footnote{In this paper, we assume a positive correlation between the value of the performance score and the behaviour of the classifier, i.e. higher values of $\mathcal{S}$ represent higher performances. In the event of a negative correlation, the sign of $\mathds{I}^{\mathcal{S}}_K(\boldsymbol{\eta})$ must be reversed.} $\mathcal{S}$ using the \textsf{BDR} as a classifier is defined as follows:
\begin{equation}
\mathds{I}^{\mathcal{S}}_K(\boldsymbol{\eta})= \displaystyle\int\limits_{\boldsymbol{\Theta}}{}[\mathcal{S}_{B}(\boldsymbol{e},\boldsymbol{\theta})-\mathcal{S}_{B}(\boldsymbol{\eta},\boldsymbol{\theta})]d\boldsymbol{\theta}.
\label{imbfeat}
\end{equation}
\end{definition}

\noindent It can be easily seen that the previous equation fulfils our couple of objectives. First, the implication of the class-overlapping on the behaviour of the inferred classifier is taken out of the equation by means of the integration; every possible set of parameter values for the probability distributions of the generative model showing a non-zero degree of overlapping among the classes is marginalised out. As a result, the average influence of the class distribution on the behaviour of the inferred classifier, as conceived by the performance score $\mathcal{S}$, can then be quantified and studied: assuming a fixed parametric family for the local distribution, positive values of $\mathds{I}^{\mathcal{S}}_K(\boldsymbol{\eta})$ denote that the performance score $\mathcal{S}$ obtains, on average, higher values for the \textsf{BDR} when it is used on a balanced scenario rather than when it is inferred from a class distribution $\boldsymbol{\eta}$. Negative values mean the opposite; the \textsf{BDR} achieves, in general, worse values for $\mathcal{S}$  in the balanced scenario.

\subsection{Identifying the Adequate Performance Scores}

By just plotting $\mathds{I}^{\mathcal{S}}_K(\boldsymbol{\eta}), \forall \boldsymbol{\eta}$ using different performance scores, we can perceive, at a glance, how these scores differ in measuring the goodness of the traditional classifiers. However, we are incapable of determining which of them are adequate to validate classifiers in unbalanced domains. Thus, in order to accomplish the latter task, we must take advantage of the prior studies \cite{Gu09}\cite{Dru05}\cite{Axe00} on the expected behaviour of the popular \textsf{BDR} when it faces skewed class distributions. By doing so, we can determine the shape that an adequate performance score should have for the influence function in the class-imbalance spectrum. Then, this shape can be straightforwardly used as a representative case to discern which performance scores are appropriate for the class-imbalance scenario. For this purpose, we focus on the long discussed hindering behaviours of the \textsf{BDR}:
\begin{enumerate}[label=(\roman*)]
\item A good prediction power of the \textsf{BDR} is only guaranteed for the majority classes \cite{Gu09}\cite{Axe00}. Therefore, the best performance of the \textsf{BDR} for all classes should occur when they share the same class probability, i.e. the balanced scenario.
\item In highly unbalanced situations, the \textsf{BDR} often performs little better than a dummy classifier always predicting the most common classes \cite{Dru05}. Hence, the performance of the \textsf{BDR} should be highly penalised in those situations.
\end{enumerate}
Then, from these hindering aspects, we define the shape that the influence function of an adequate performance score should have:

\noindent{\bf Properties of an adequate performance score.} {\it A performance score $\mathcal{S}$ is successfull in being adequate to determine the competitiveness of a classifier $\Psi$ in the class-imbalance scenario if, assuming a scenario where a classifier is inferred by directly minimising a $0$-$1$ loss (maximising the classification accuracy),}
\begin{enumerate}[label=(\alph*)]
\item {\it its influence function is positive for almost any $\boldsymbol{\eta} \neq \mathbf{e}$, and}
\item {\it it shows a negative correlation to the minority class probability, i.e. $\mathds{I}^{\mathcal{S}}_K(\boldsymbol{\eta})$ grows as the Euclidean distance between $\boldsymbol{\eta}$ and $\mathbf{e}$ gets larger.}
\end{enumerate}

\subsubsection{Experimental Model for the Study}

When the generative model is known, in theory, eq. (\ref{imbfeat}) is obtainable. However, solving an integral of such characteristics with independence of the parametric family is intricate. Therefore, we assume a parametric family with the following characteristics: (i) simple enough to be able to fully interpret the results and complete enough to be able to represent real world problems, (ii) a set of parameters which allows us to unambiguously represent each particular model as a single point. For these reasons, as a generative model we make use of a {\it univariate Gaussian identifiable mixture of components with unit variances whose means are separated by a fixed overlapping factor $\delta$}. Under this assumption, since each $\boldsymbol{\theta}_i=\{\mu_i,\sigma_i\}$ is such that $\mu_i=(i-1)\delta$ and $\sigma_i=1$, the parameters can be simplified to just $\boldsymbol{\theta}=\{\delta\}$ and eq. (\ref{imbfeat}) be rewritten as:
\begin{equation}
\mathds{I}^{\mathcal{S}}_K(\boldsymbol{\eta})= \displaystyle\int\limits_{0}^{\infty}[\mathcal{S}_{B}(\boldsymbol{e},\delta)-\mathcal{S}_{B}(\boldsymbol{\eta},\delta)]d\delta.
\label{imbfeat2}
\end{equation}
In Figure \ref{fig:binary1} and \ref{fig:binary2} (local and global scores for binary problems, respectively) and Figure \ref{fig:multi2} (global scores in the multi-class framework) we numerically approximate\footnote{{\bf NIntegrate} with all options set to default.}, using Mathematica \cite{Mathematica10}, the influence function of eq. (\ref{imbfeat2}), for each performance score of Table \ref{scores} and through the whole class-imbalance spectrum. For {\it binary problems}, we numerically approximate the following equation:
\begin{equation}
\mathds{I}^{\mathcal{S}}_2(\eta)= \displaystyle\int\limits_{0.01}^{10}[\mathcal{S}_{B}(\frac{1}{2},\delta)-\mathcal{S}_{B}(\eta,\delta)]d\delta.
\label{binarycomp}
\end{equation}
\noindent Here, since, in binary problems, there are only two class probabilities which are complementary to each other, we can simplify $\boldsymbol{\eta}$ to $\eta$ ($\eta_1=\eta$ and $\eta_2=1-\eta$). Also, note that the integral is calculated in the domain $[0.01,10]$, instead of $[0,\infty)$. We choose an upper limit distance of $10$ because, for unit variances, it is almost equivalent to not overlapping. Regarding the lower limit, we choose $0.01$ in order to avoid the singularity $\delta=0$\footnote{Assuming an upper limit of $10$ instead of infinity, the degree of overlapping ($e_B$) is $2.9 \times 10^{-7}$ rather than $0$. Analogously, assuming a lower limit of $0.01$ means that, instead of having an overlapping of $0.5$, we have $0.498$.}. 

Regarding the {\it multi-class framework}, we perform the same case study. However, as this setting is more complex, several changes are made. First, for the sake of clarity in the presentation of the results, we simplify the multi-class framework to the following: $(K-1)$ classes are equiprobable among them, with probability $\eta=\frac{1}{K} - \frac{\epsilon}{K-1}$, whilst the remainder has a probability $\eta_1=\frac{1}{K}+ \epsilon$. Then, by means of just one parameter $\epsilon \in [-\frac{1}{K},\frac{K-1}{K}]$ which determines the imbalance extent, we can easily study the hindrance produced by the class distribution in the multi-class framework and present the results in a bi-dimensional plot. Note that the value $\epsilon=0$ corresponds to the balanced setting. Therefore, we numerically approximate the following equation:
\begin{equation}
\mathds{I}^{\mathcal{S}}_K(\epsilon)= \displaystyle\int\limits_{0.01}^{10}[\mathcal{S}_{B}(0,\delta)-\mathcal{S}_{B}(\epsilon,\delta)]d\delta.
\label{multiclasscomp}
\end{equation}
\noindent Here, all the parameters\footnote{Assuming an upper limit of $10$, the degree of overlapping for $K=\{3,4,5\}$ is $\{3.8,4.3,4.6\}\times 10^{-7}$ instead of $0$. Regarding the lower limit of $0.01$, the overlapping is $\{0.64,0.72,0.77\}$ instead of $(K-1)/K$.} are the same as in binary problems except for the defined $\epsilon$. Due to the limiting space and since similar results are obtained for any arbitrary $K$, in this manuscript, only the case of $K=3$ is presented. The source code to calculate $\mathds{I}^{\mathcal{S}}_K(\epsilon)$ for any number $K$ of classes can be downloaded from {\footnotesize\url{http://github.com/jonathanSS/ClassImbalanceStudies}}. 

\subsection{Results and Discussion}

\subsubsection{Binary Problems}

Figure \ref{fig:binary1} shows the value of the function $\mathds{I}^{\mathcal{S}}_2(\eta)$ over the domain $0 \leq \eta \leq 1$ for the local performance scores of Table \ref{scores} when using the \textsf{BDR}. Specifically, the precision, $\mathcal{P}^{1}$ is shown in Figure \ref{fig:binaryprecision} and the recall, $\mathcal{R}^{1}$, in Figure \ref{fig:binaryrecall}, both for class $c_1$. Note that, for local scores, the diagrams for $c_2$ are omitted. This is due to the fact that they are a reflection with respect to the imaginary vertical axis $\eta=0.5$ of the ones of $c_1$. Next, the values of the influence function for the global performance scores for binary problems are presented in Figure \ref{fig:binary2}. There, 
the accuracy, $\mathcal{A}cc$ (Figure \ref{fig:binaryacc}), 
the maximum recall, $max$ (Figure \ref{fig:binarymax}),
the arithmetic mean, $\mathcal{A}$ (Figure \ref{fig:binaryam}), 
the geometric mean, $\mathcal{G}$ (Figure \ref{fig:binarygm}), 
the harmonic mean, $\mathcal{H}$ (Figure \ref{fig:binaryhm}),
and the minimum recall, $min$ (Figure \ref{fig:binarymin}) 
are displayed. All plots share the same style; the $x$-axis represents the value of $\eta$ and the $y$-axis, $\mathds{I}^{\mathcal{S}}_2(\eta)$. The area between the function and the $x$-axis is highlighted for visual purposes.

\begin{figure*}[!th] 
\centering
\begin{subfigure}{.5\textwidth}
  \centering
  \includegraphics[width=.80\linewidth]{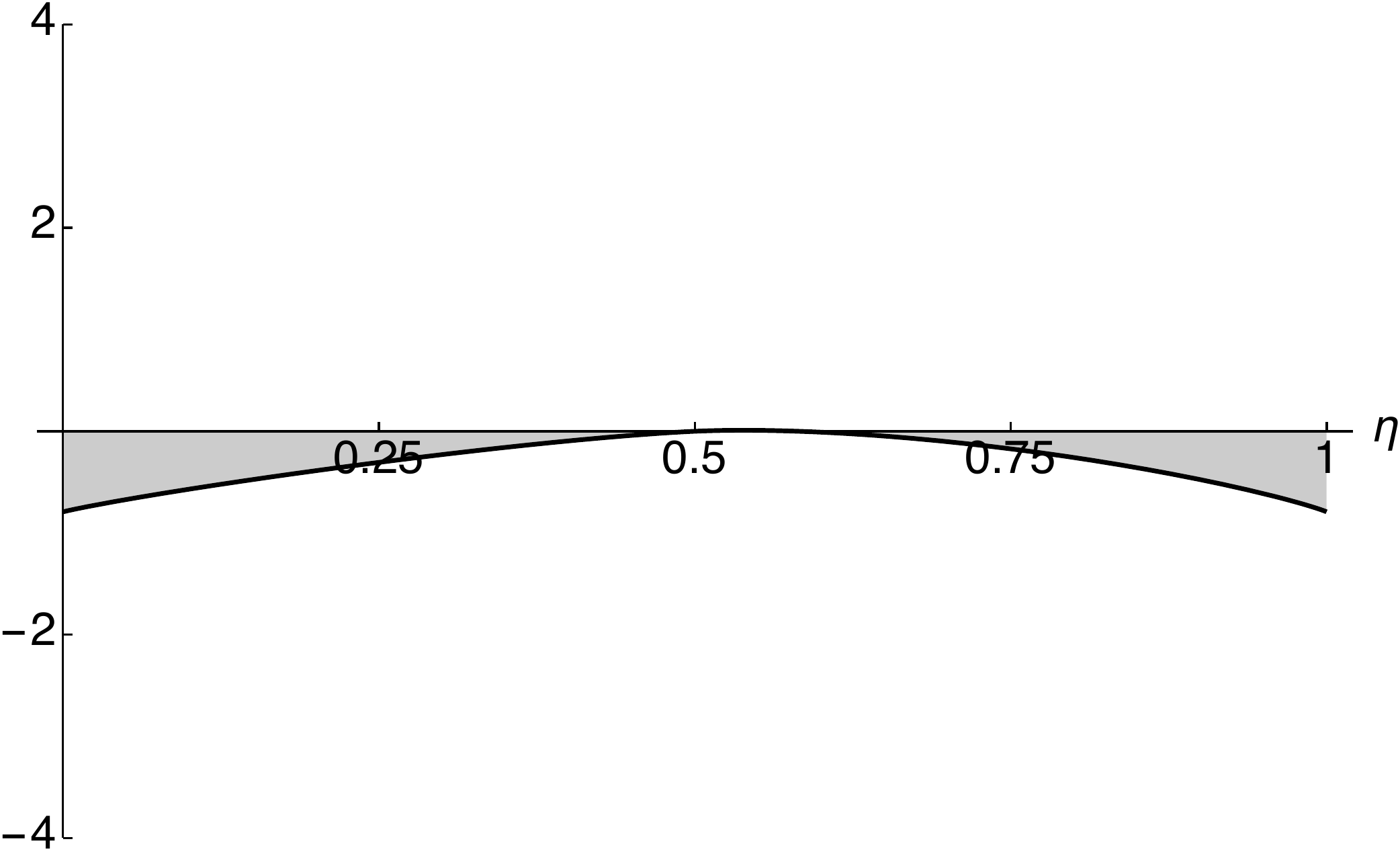}
  \caption{Precision ($\mathcal{P}^{1}$)}
  \label{fig:binaryprecision}
\end{subfigure}%
\begin{subfigure}{.5\textwidth}
  \centering
  \includegraphics[width=.80\linewidth]{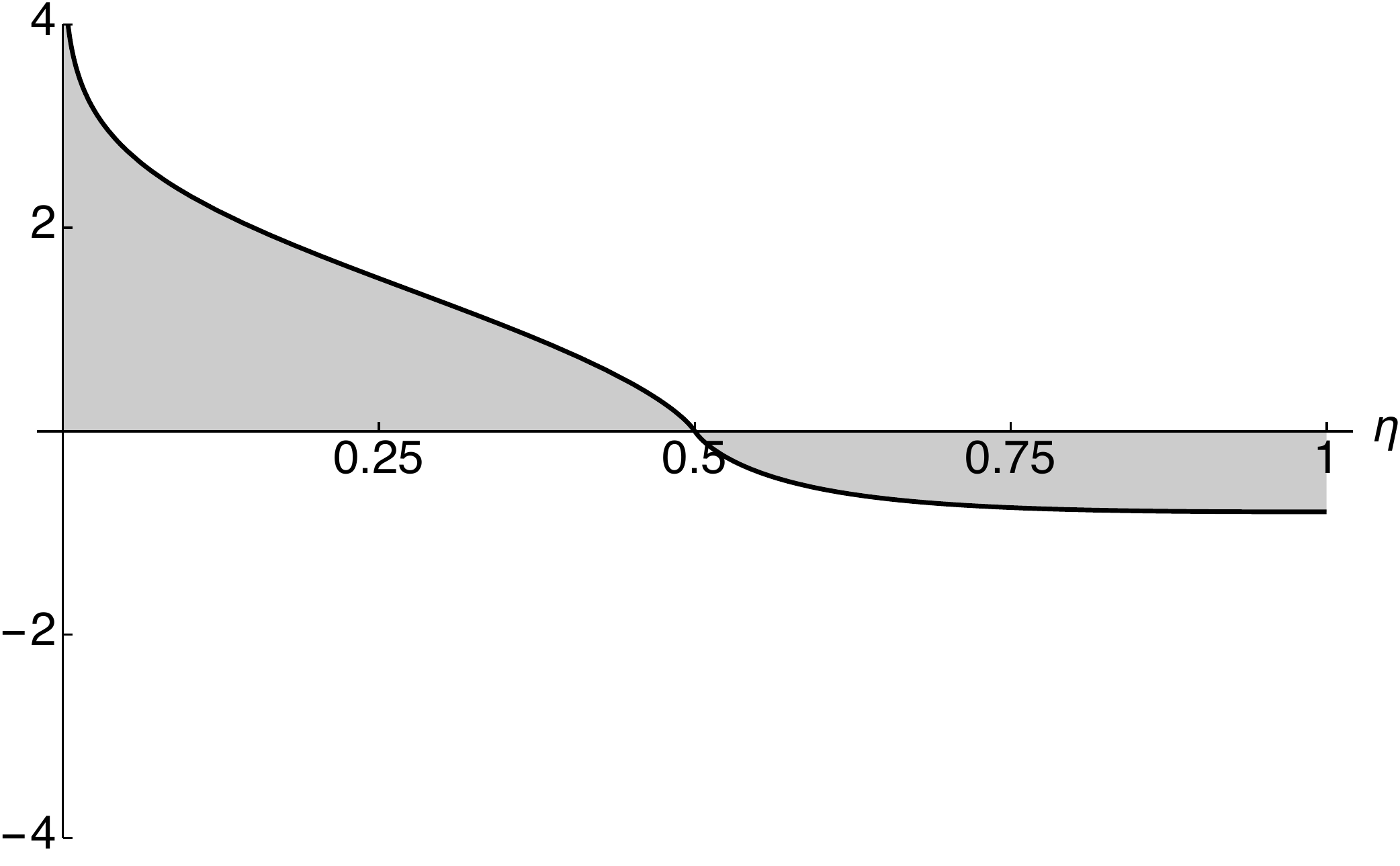}
  \caption{Recall ($\mathcal{R}^{1}$)}
  \label{fig:binaryrecall}
\end{subfigure}
\caption{The influence function in binary problems, $\mathds{I}^{\mathcal{S}}_2(\eta)$, for both precision and recall throughout the range $0 \leq \eta \leq 1$  ($c_1$).}
\label{fig:binary1}
\end{figure*}

\begin{figure*}[!h] 
\centering
\begin{subfigure}{.33\textwidth}
  \centering
  \includegraphics[width=\linewidth]{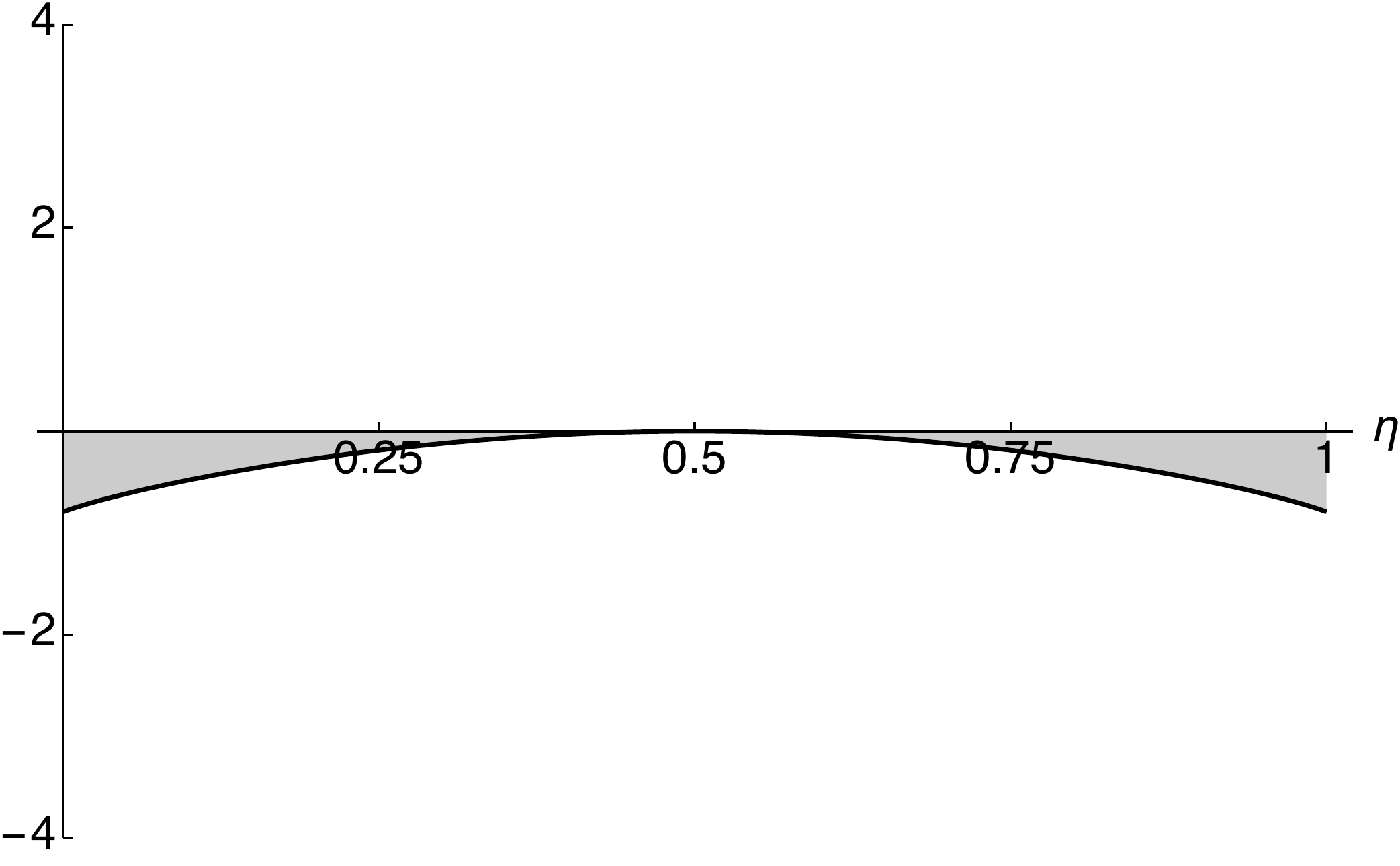}
  \caption{Classification accuracy ($\mathcal{A}cc$)}
  \label{fig:binaryacc}
\end{subfigure}%
\begin{subfigure}{.33\textwidth}
  \centering
  \includegraphics[width=\linewidth]{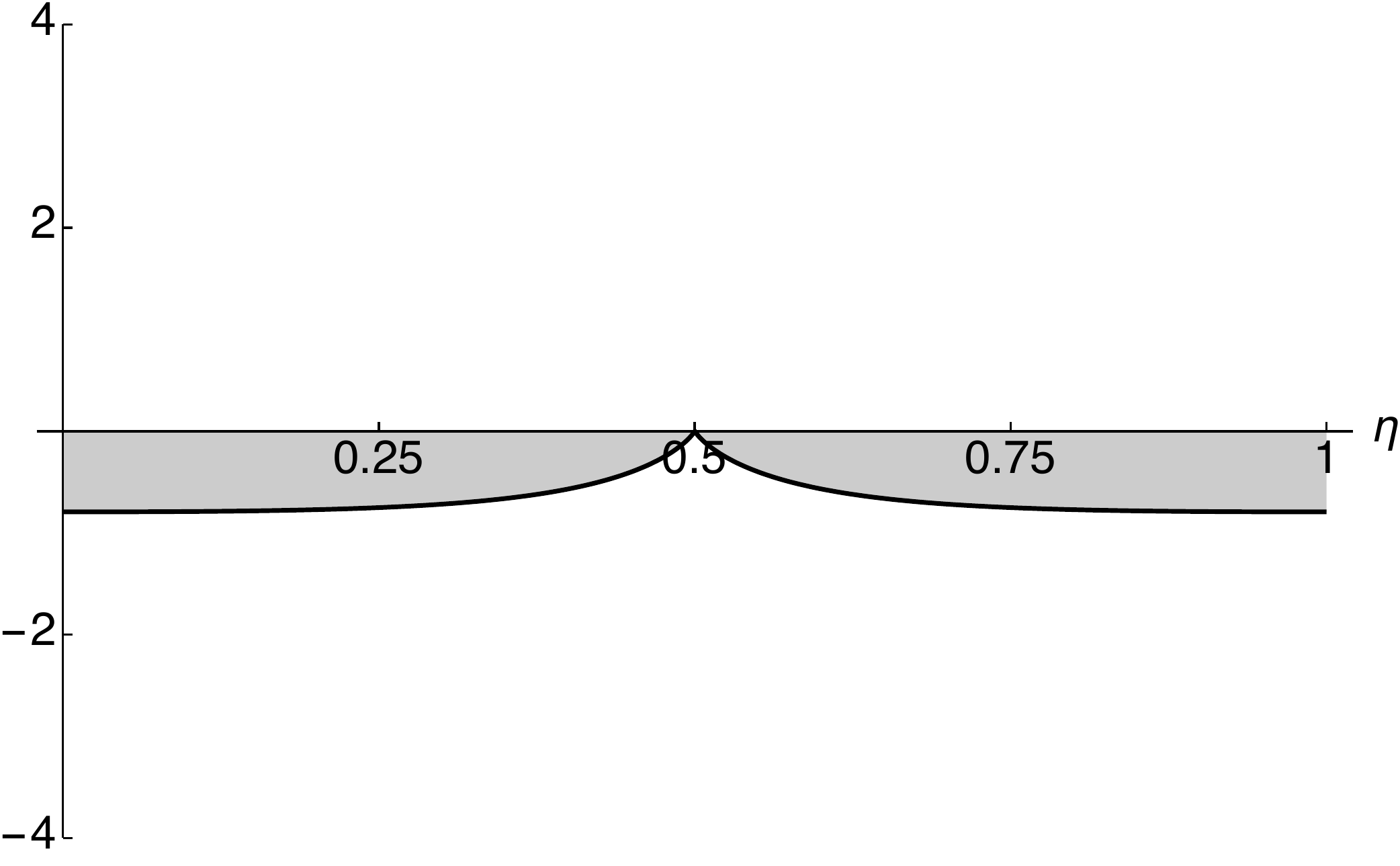}
  \caption{Maximum recall ($max$)}
  \label{fig:binarymax}
\end{subfigure}%
\begin{subfigure}{.33\textwidth}
  \centering
  \includegraphics[width=\linewidth]{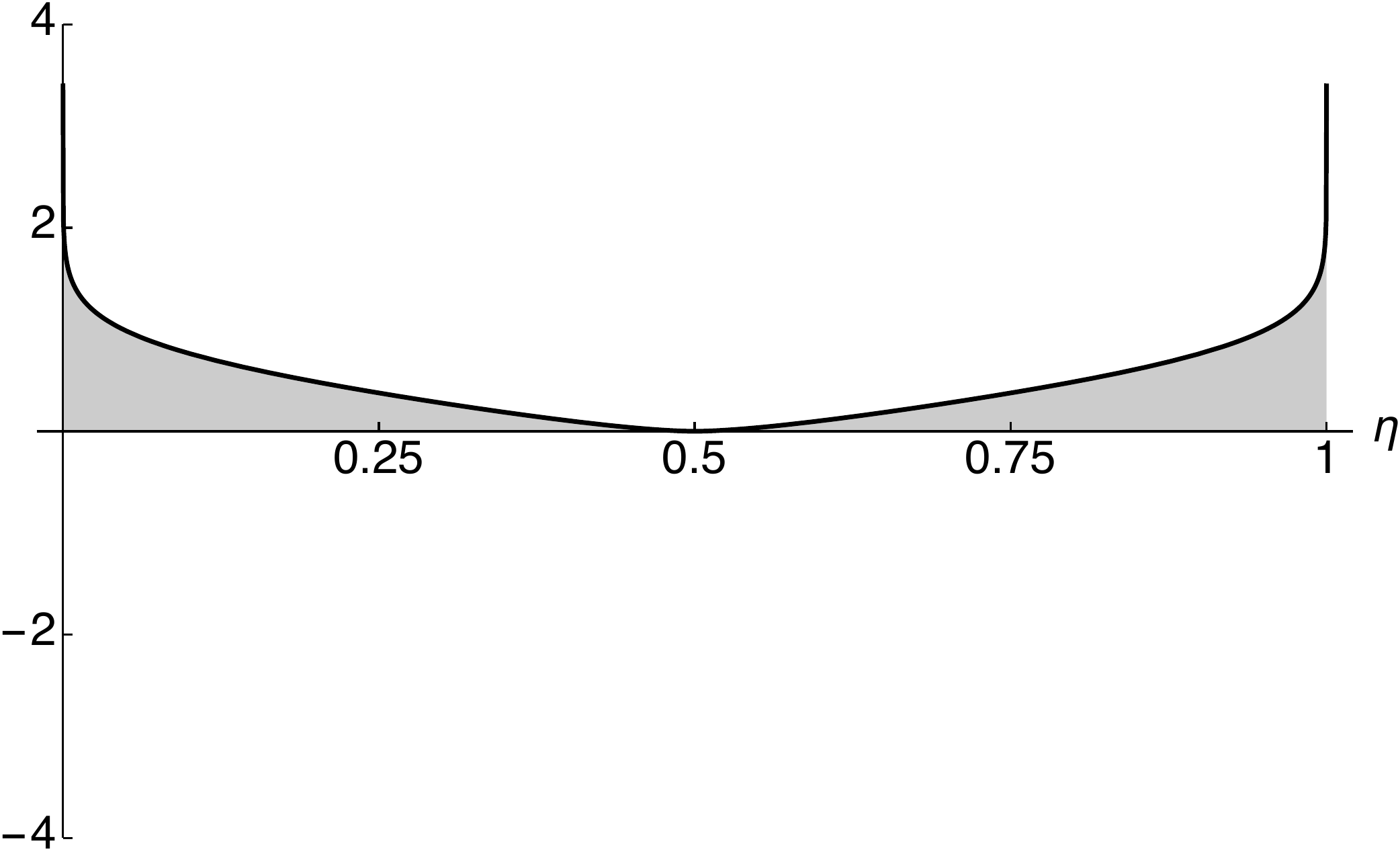}
  \caption{Arithmetic mean ($\mathcal{A}$)}
  \label{fig:binaryam}
\end{subfigure}
\begin{subfigure}{.33\textwidth}
  \centering
  \includegraphics[width=\linewidth]{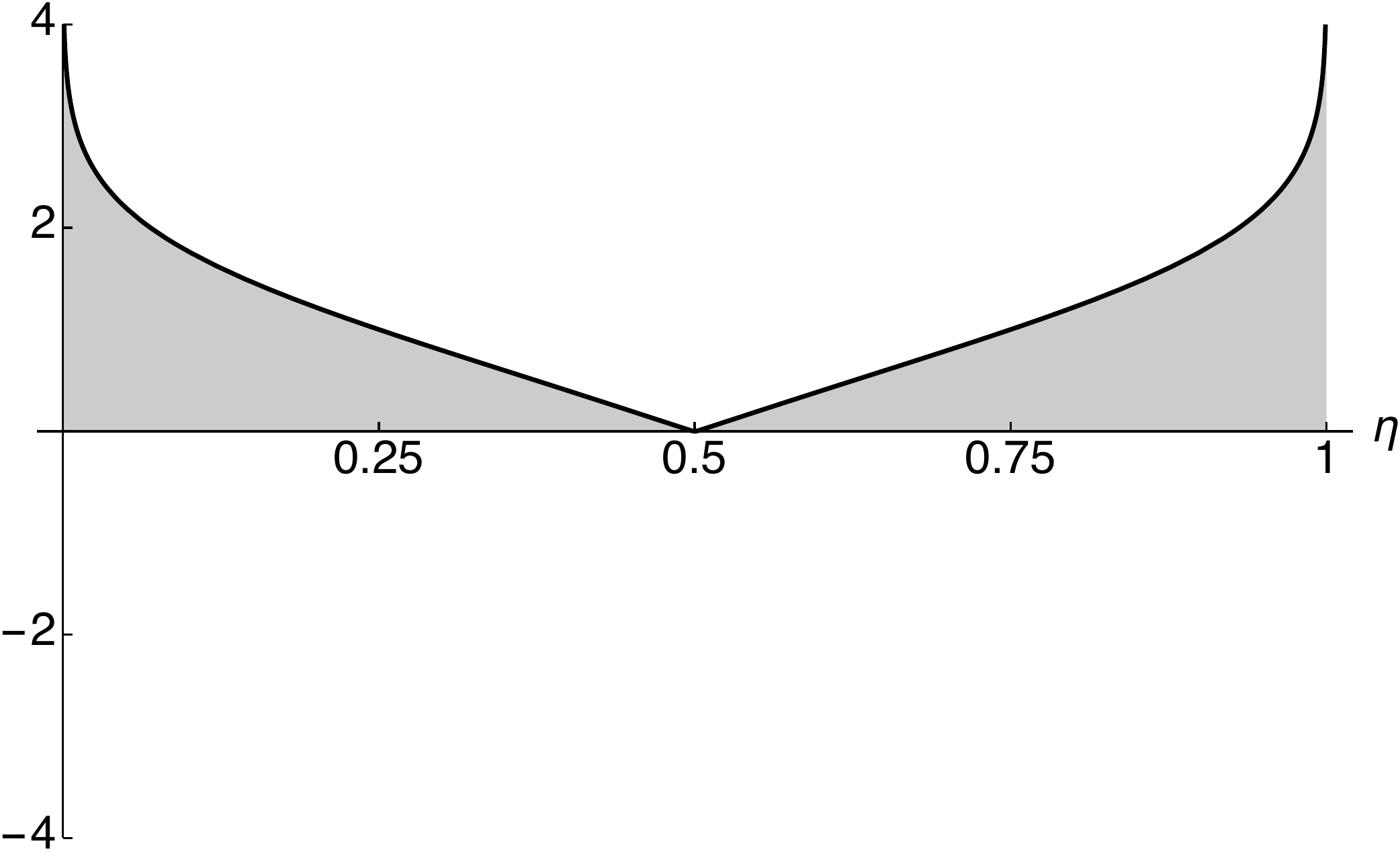}
  \caption{Geometric mean ($\mathcal{G}$)}
  \label{fig:binarygm}
\end{subfigure}%
\begin{subfigure}{.33\textwidth}
  \centering
  \includegraphics[width=\linewidth]{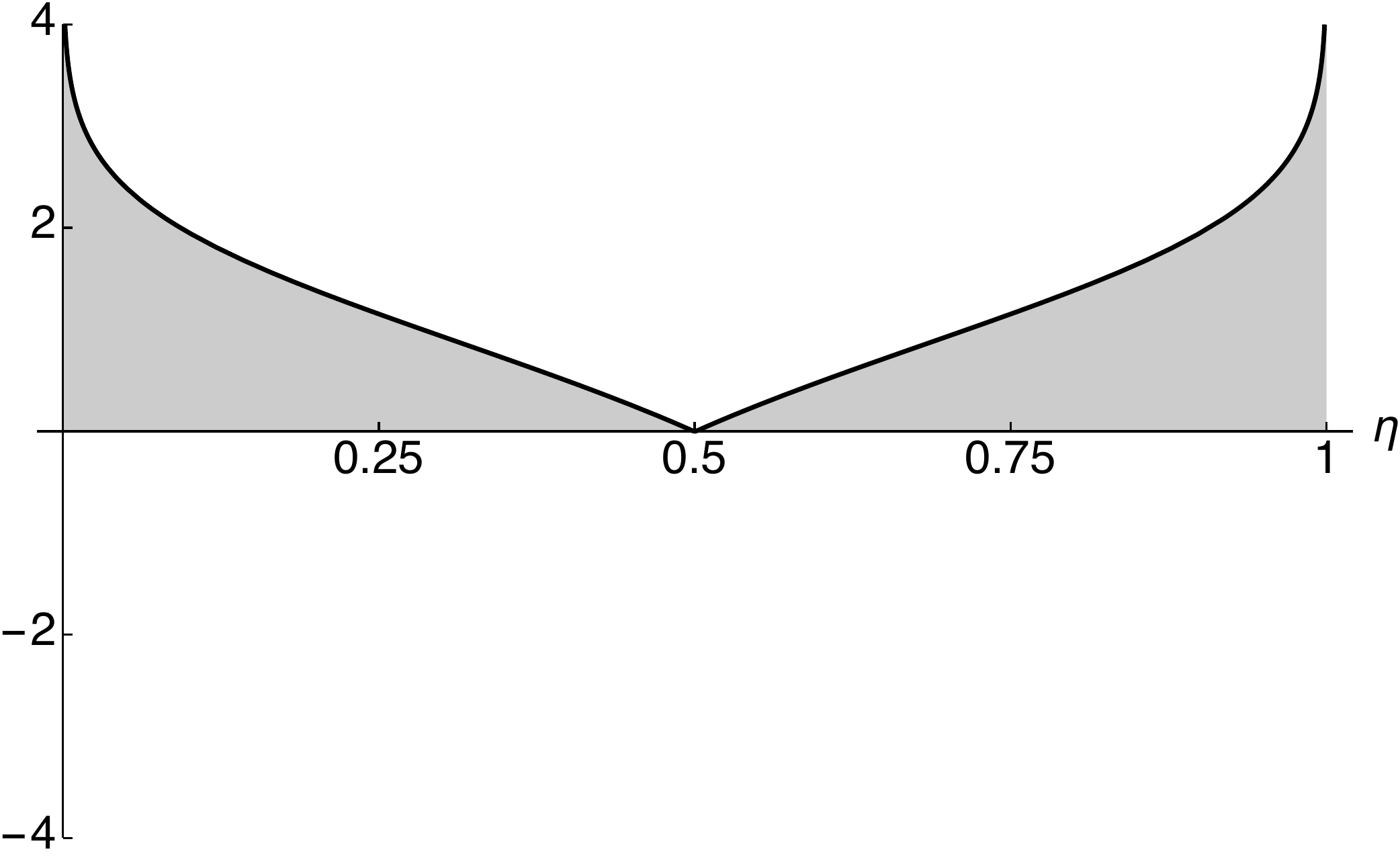}
  \caption{Harmonic mean ($\mathcal{H}$)}
  \label{fig:binaryhm}
\end{subfigure}%
\begin{subfigure}{.33\textwidth}
  \centering
  \includegraphics[width=\linewidth]{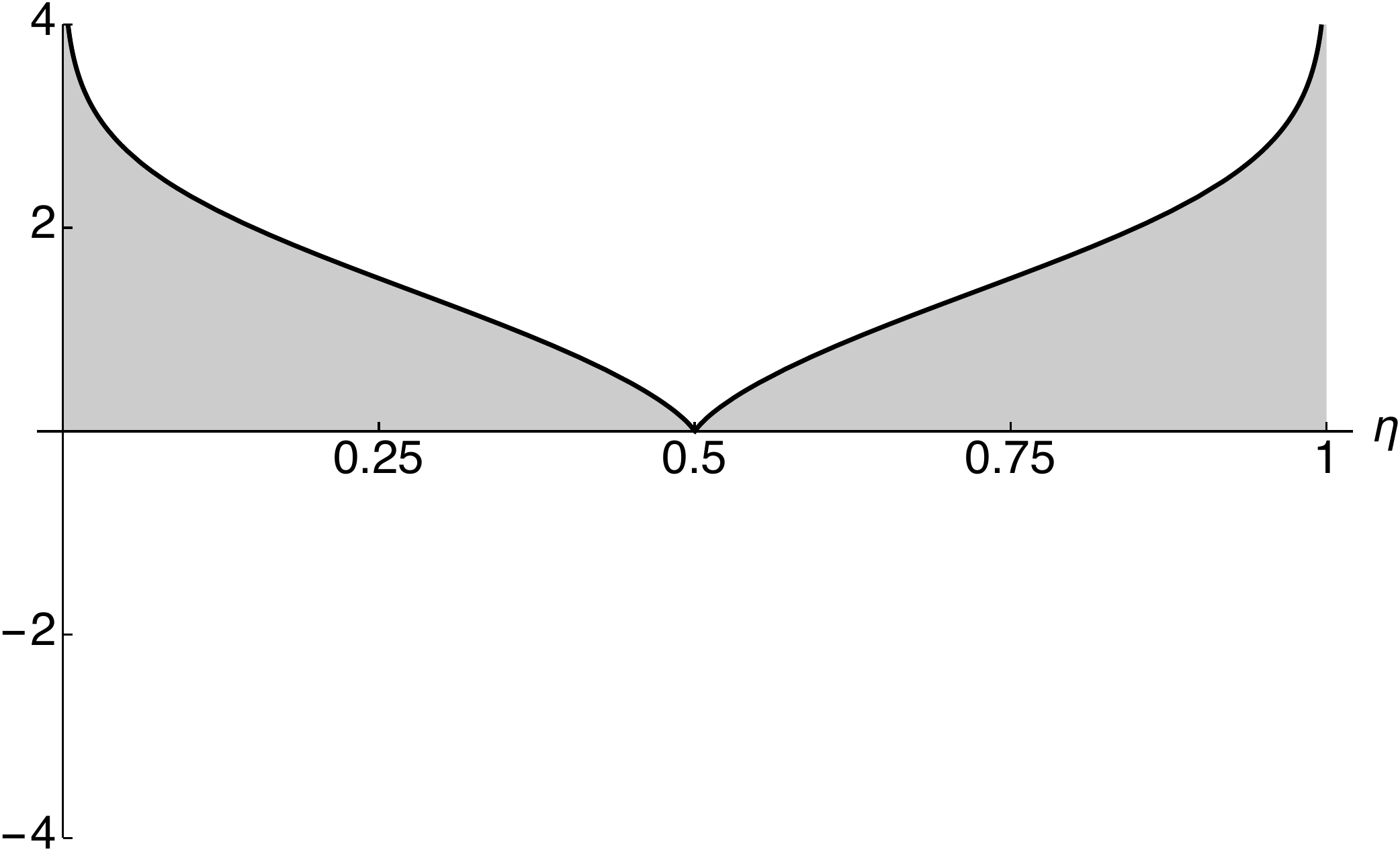}
  \caption{Minimum recall ($min$)}
  \label{fig:binarymin}
\end{subfigure}
\caption{The influence function in binary problems, $\mathds{I}^{\mathcal{S}}_2(\eta)$, for each global performance score throughout the range $0 \leq \eta \leq 1$.}
\label{fig:binary2}
\end{figure*}

\begin{figure*}[!h] 
\centering
\begin{subfigure}{.33\textwidth}
  \centering
  \includegraphics[width=\linewidth]{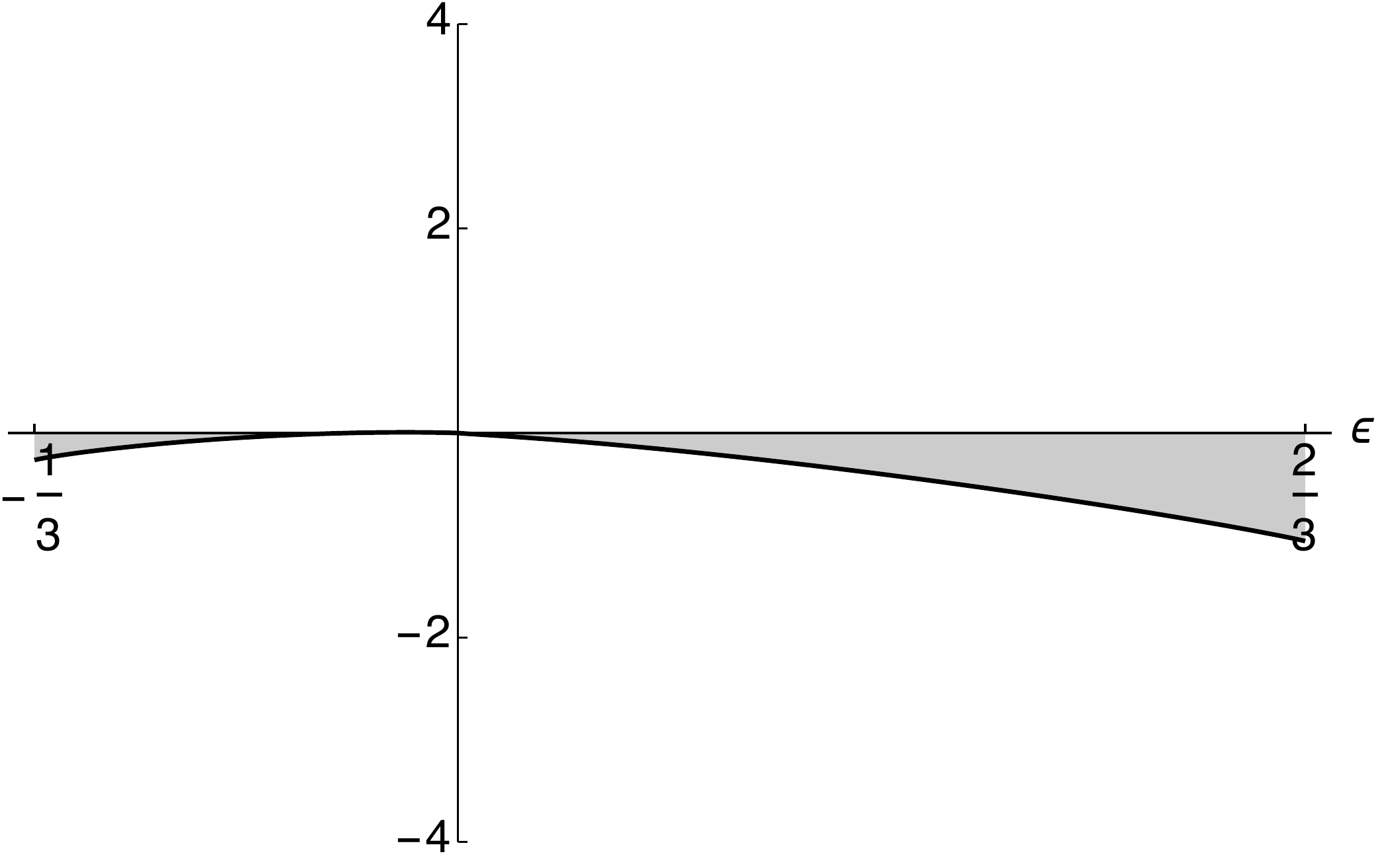}
  \caption{Classification accuracy ($\mathcal{A}cc$)}
  \label{fig:multiacc}
\end{subfigure}%
\begin{subfigure}{.33\textwidth}
  \centering
  \includegraphics[width=\linewidth]{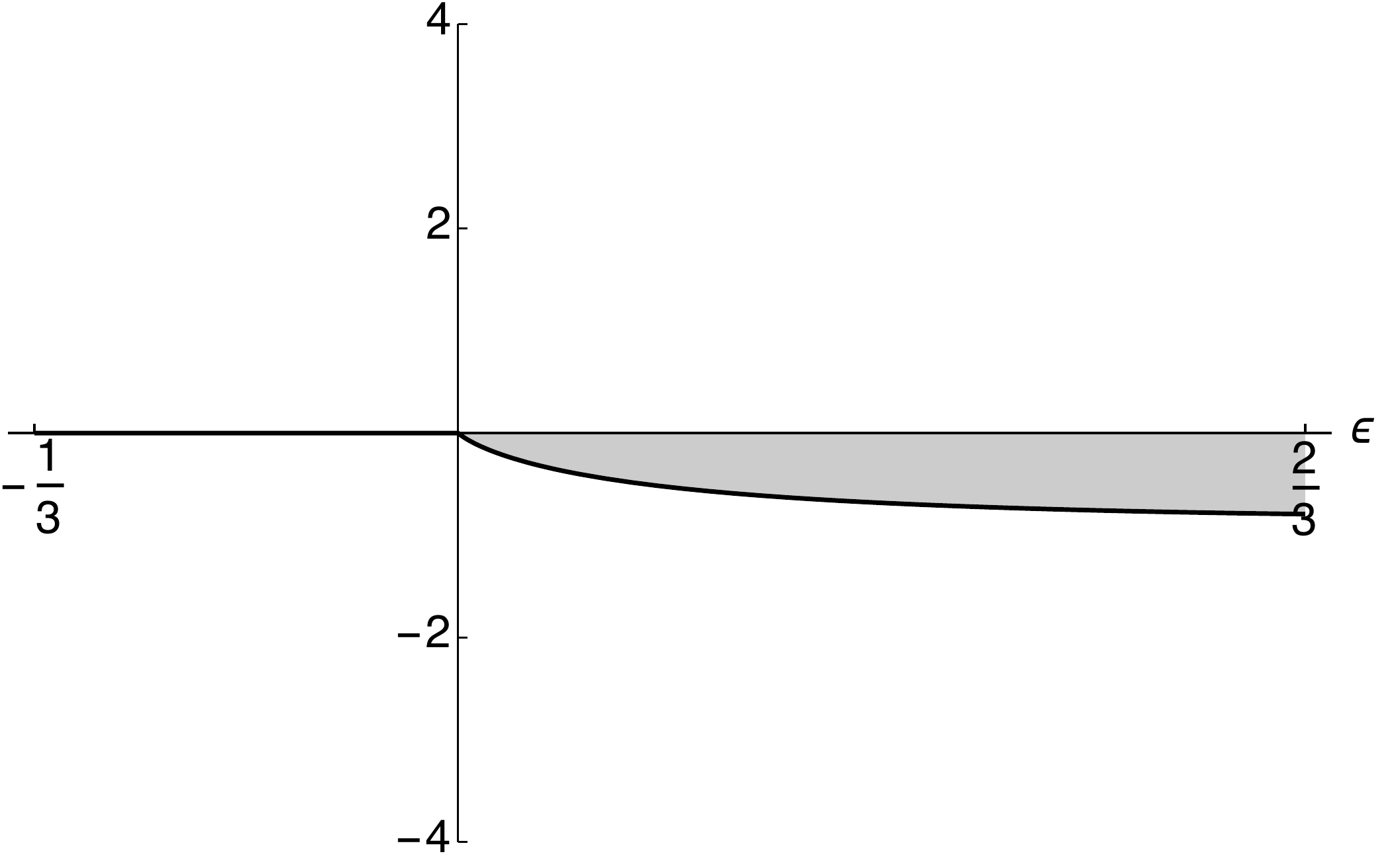}
  \caption{Maximum recall ($max$)}
  \label{fig:multimax}
\end{subfigure}%
\begin{subfigure}{.33\textwidth}
  \centering
  \includegraphics[width=\linewidth]{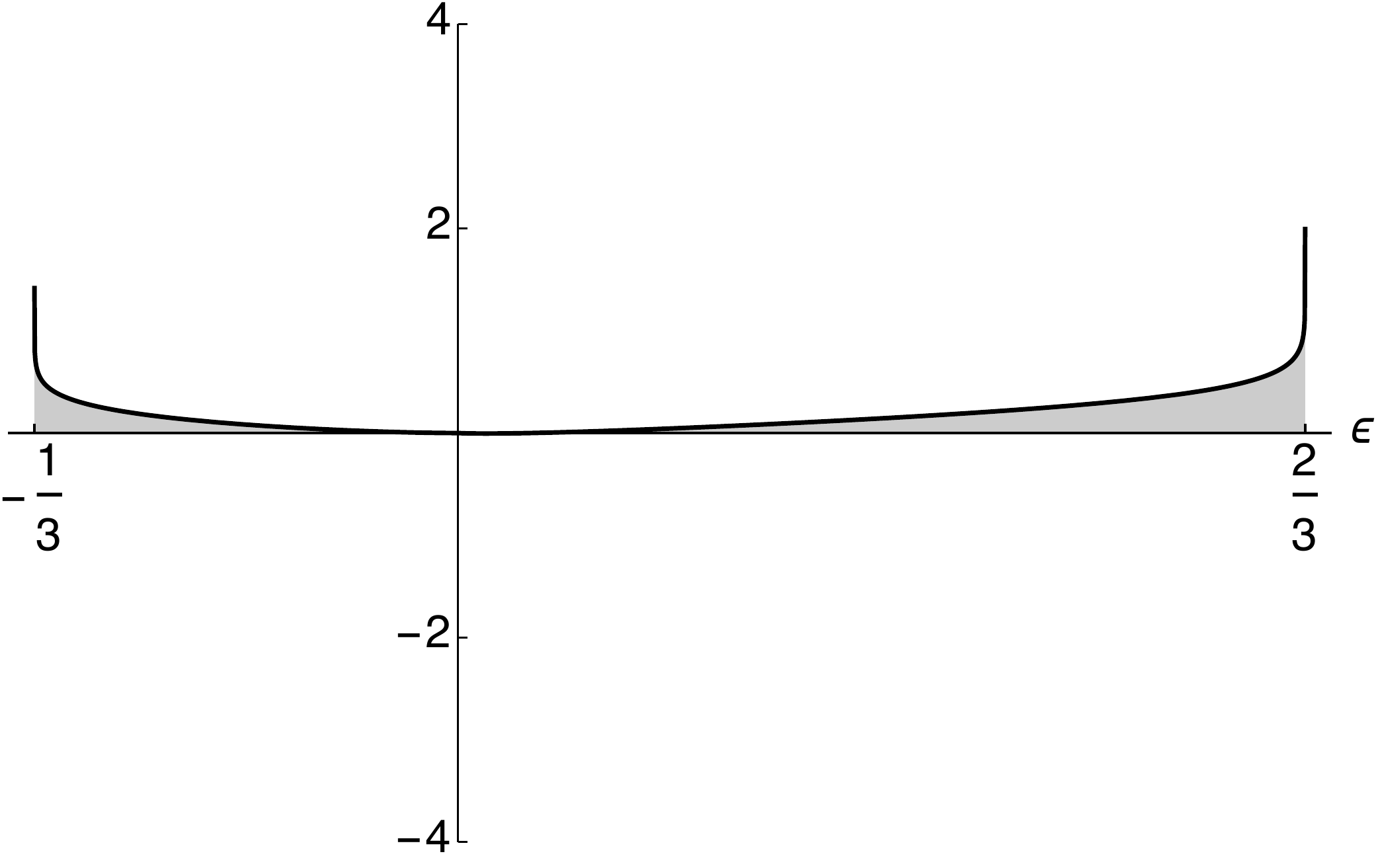}
  \caption{Arithmetic mean ($\mathcal{A}$)}
  \label{fig:multiam}
\end{subfigure}
\begin{subfigure}{.33\textwidth}
  \centering
  \includegraphics[width=\linewidth]{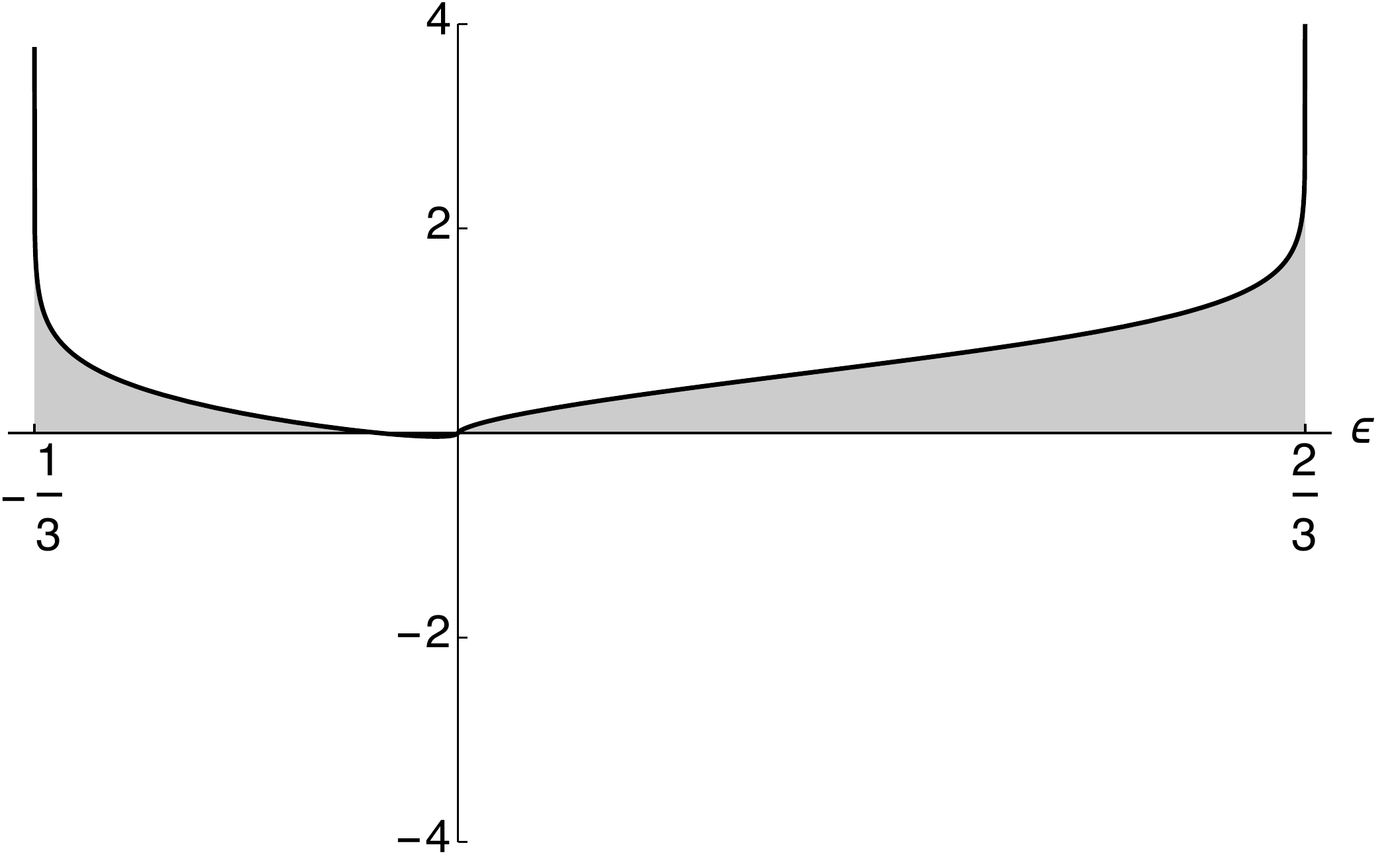}
  \caption{Geometric mean ($\mathcal{G}$)}
  \label{fig:multigm}
\end{subfigure}%
\begin{subfigure}{.33\textwidth}
  \centering
  \includegraphics[width=\linewidth]{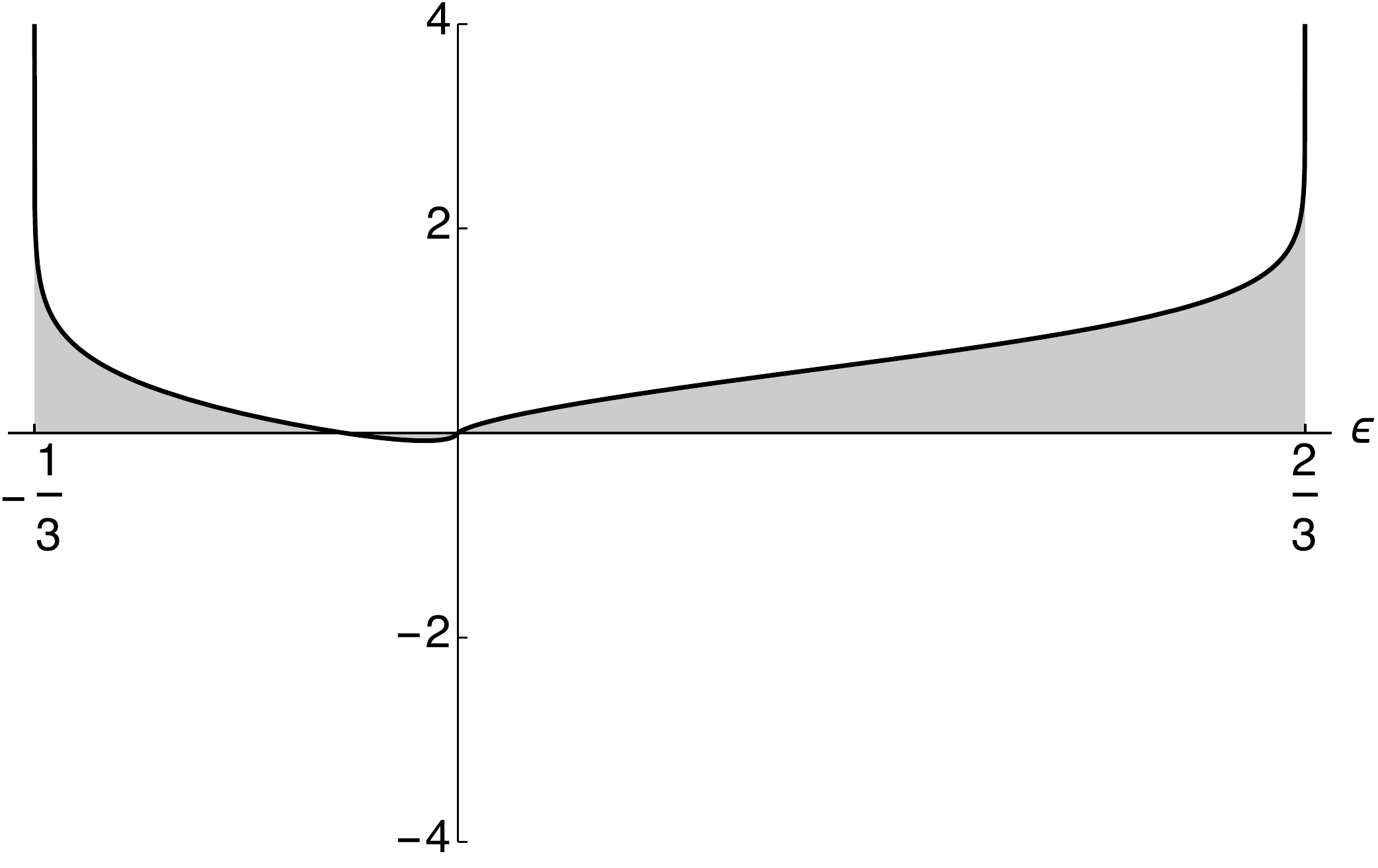}
  \caption{Harmonic mean ($\mathcal{H}$)}
  \label{fig:multihm}
\end{subfigure}%
\begin{subfigure}{.33\textwidth}
  \centering
  \includegraphics[width=\linewidth]{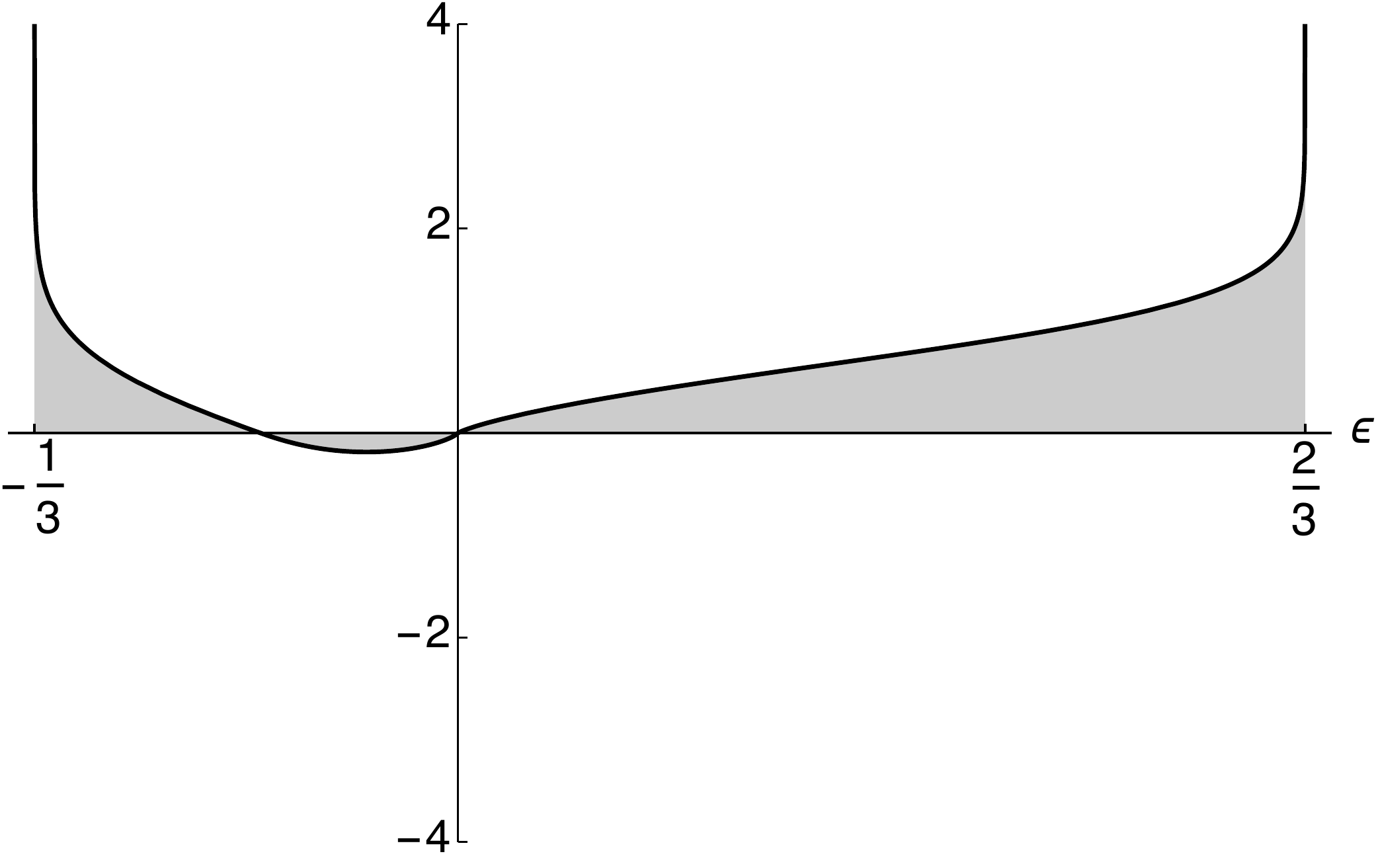}
  \caption{Minimum recall ($min$)}
  \label{fig:multimin}
\end{subfigure}
\caption{The influence function in ternary problems, $\mathds{I}^{\mathcal{S}}_3(\epsilon)$, for each global performance score throughout the range $-\frac{1}{3} \leq \epsilon \leq \frac{2}{3}$.}
\label{fig:multi2}
\end{figure*}

Several insights about {\bfseries the diverse behaviour of the performance scores} can be extracted from just a glimpse at Figures \ref{fig:binary1} and Figure \ref{fig:binary2}. First, regarding the {\it local performance scores} studied, they are completely different to each other; whilst the influence function for $\mathcal{P}^{1}$ shows a negative behaviour for all $\eta \neq 0.5$, $\mathds{I}^{\mathcal{R}^{1}}_2(\eta)$ is positive when $c_1$ is the minority class and takes negative values in the opposite case. Concerning the most common {\it global performance scores} of Figure \ref{fig:binary2}, two different major behaviours can be easily detected: on one hand, we have $\mathcal{A}cc$; its influence function shows a negative behaviour for the whole spectrum of unbalanced settings. On the other hand, we have the performance scores commonly used in unbalanced domains, $\mathcal{A}$, $\mathcal{G}$, and $\mathcal{H}$. These scores show a positive influence function for all the values of $\eta$ and they share a common shape; their lowest value is in the equiprobability, and, from there, they strictly increase to the extremes ($\eta\sim0$ or $\eta\sim1$), where they, finally, achieve an exponential growth rate. The two extreme H\"{o}lder means introduced in this paper can also be categorised into these two groups. While $max$ shares a similar behaviour to the $\mathcal{A}cc$, $min$ behaves closely to the scores utilised for unbalanced data. Lastly, it is also worth noticing the similarity between the shape of the influence functions for $\mathcal{P}^{1}$ and $\mathcal{A}cc$ in this binary setting. Despite the fact that one is a local score and the other is a global score, they share a close response to a changing class probability distribution.

Thereafter, we deal with the issue of determining the {\bfseries adequateness of the studied performance scores} by comparing the influence function of these scores to the representative case, defined in Proposition 1, of the influence function that an adequate performance score should have:

\noindent-- Although {\it local performance scores} are not sufficient to summarise the overall performance for losing the global picture, they give partial valuable advice: several global performance scores are just H\"{o}lder means \cite{Bul03} or other averaging functions over the local performances, e.g. $\mathcal{A}cc$, $\mathcal{G}$, etc \cite{San15}. In order to study the adequateness of these scores, we focus on the most problematic section of the plots: the values of $\eta < 0.5$. There, the class $c_1$ is the minority class, and due to the fact that the use of the \textsf{BDR} is assumed, the misclassification rate of $c_1$ considerably grows, on average, as $\eta$ decreases. As for $\mathcal{P}^{1}$, although it is extensively used in the literature, it is not adequate enough for this scenario since it does not penalise the decrease on the prediction power of the \textsf{BDR} for $c_1$ when this class becomes minority: $\mathds{I}^{\mathcal{P}^{1}}_2(\eta)$ takes negative values for $\eta < 0.5$ (Proposition 1). This is due to the fact that, for low values of $\eta$, the \textsf{BDR} achieves, for $\mathcal{P}^{1}$, higher values than in the balanced scenario. There, it only classifies examples as $c_1$ if they are ``undoubtedly'' drawn from that minority class, i.e. examples that are far away from the density of $c_2$. In such schemes, the ratio of the correctly classified examples over the predicted ones is considerably high, and it increases as the distribution grows in skewness. In the extreme, we have the convention $0/0=1$. On the contrary, $\mathcal{R}^{1}$ shows a more appropriate description of the effect of the class distribution on the resulting classifier; $\mathds{I}^{\mathcal{R}^{1}}_2(\eta)$ is positive for $\eta<0.5$. Therefore, regarding using precision or recall as parameters for a global measure function, two conclusions can be extracted: 
\begin{enumerate}[label=(\roman*)]
\item Unweighted H\"{o}lder means among the precisions are inadequate due to the fact that, as $c_2$ is a reflection of $c_1$, the values of $\mathds{I}^{\mathcal{S}}_2(\eta)$ for an average function will always be under the $x$-axis. Other factors are more influential in the score rather than a good prediction of the minority class.
\item Averaging recalls is a better choice since, for certain H\"{o}lder means, a positive value of the influence function for almost all $\eta$ will be shown. This is due to the fact that the positive part of $\mathds{I}^{\mathcal{S}}_2(\eta)$ for the recalls of $c_1$ and $c_2$ is always greater than the negative part.
\end{enumerate}

\noindent-- In relation to the {\it global performance scores}, our study supports the conclusion of the state-of-the-art literature stating that $\mathcal{A}cc$ is not an adequate score for unbalanced problems \cite{Gu09}. Here, Proposition 1 is not required to determine the insensitivity of the classification accuracy to the class-imbalance extent. Since its influence function shares shape with the inadequate function $\mathds{I}^{\mathcal{P}^{1}}_2(\eta)$, $\mathcal{A}cc$ can be directly appointed as an inadequate performance score. By extension, $max$ is not an appropriate score as well. The reason of the inadequateness of the latter is that it only takes into account the maximum recall, which usually coincides with the recall of the majority class due to the fact that the \textsf{BDR} favours this kind of classes. Contrastingly, the performance scores of the other behavioural group ($\mathcal{A}$, $\mathcal{G}$, $\mathcal{H}$ and $min$) are adequate to the class-imbalance problem since their influence functions meet the two conditions of Proposition 1. In this case, $\mathcal{A}$ is the score showing the lowest sensitivity -- its influence function has the smoothest shape --, which is followed by $\mathcal{G}$, then $\mathcal{H}$ and, finally, $min$. There it can be seen that, in these scores, the behaviour of the \textsf{BDR} is penalised when the class distribution is not balanced, i.e. the misclassification of the minority class also produces high drops in these performance scores. In the extremes, we discover that the situation of the \textsf{BDR} acting as a dummy classifier in situations of extremely skewed class distribution is strongly penalised. There, the influence functions exponentially grow as $\eta$ gets closer to either $0$ or $1$. In general terms, unweighted H\"{o}lder means over the recalls with $p>1$ will be inadequate to the class-imbalance extent since greater recalls have more presence in the score than lower recalls. On the contrary, means with $p<1$ would be adequate to determine the competitiveness of a classifier as lower recalls have more influence in the resulting score. This generalisation can be easily drawn from the H\"{o}lder mean inequality (Definition 1).

\subsubsection{Multi-class Problems}

Figure \ref{fig:multi2} presents results for 
accuracy, $\mathcal{A}cc$ (Figure \ref{fig:multiacc}), 
maximum recall, $max$ (Figure \ref{fig:multimax}), 
arithmetic mean, $\mathcal{A}$ (Figure \ref{fig:multiam}), 
geometric mean, $\mathcal{G}$ (Figure \ref{fig:multigm}), 
harmonic mean, $\mathcal{H}$ (Figure \ref{fig:multihm}), 
and minimum recall, $min$ (Figure \ref{fig:multimin}). 
The local performance scores are omitted in the multi-class scenario since the loss of the global picture becomes aggravated when more than two classes are used.  All these figures are similar to the binary functions but the $x$-axis shows, in the current case, the parameter $\epsilon$ over the range $-1/3 \leq \epsilon \leq 2/3$. There, the values $\epsilon<0$ represent the multi-majority version of the problem and  $\epsilon>0$ the multi-minority version. In view of these results, the conclusions for binary problems generalise well to the multi-class scenario: 

Regarding the {\bfseries diversity of the global performance scores}, the same two major groups can be perceived; $\mathcal{A}cc$ and $max$ on one hand, and the Pythagorean means and $min$ on the other. In the latter group, it can be seen that, due to the H\"{o}lder mean inequality, the scores can be also arranged by their sensitivity to the class-imbalance extent as $\mathds{I}^{\mathcal{A}}_K(\epsilon) \leq \mathds{I}^{\mathcal{G}}_K(\epsilon) \leq \mathds{I}^{\mathcal{H}}_K(\epsilon) \leq \mathds{I}^{min}_K(\epsilon)$.

Concerning the {\bfseries adequateness of the performance scores}, it can be concluded that:
\begin{enumerate}[label=(\roman*)]
\item Due to the fact that their influence functions violate the conditions of Proposition 1 ($\mathds{I}^{\mathcal{S}}_K(\epsilon)\leq 0, \forall \epsilon$), $\mathcal{A}cc$ and $max$ are inadequate scores for assessing unbalanced problems.
\item The unweighted Pythagorean means over the recalls and $min$ are adequate to assess the competitiveness of classifiers in this complex domain; their influence functions truly capture the hindering behaviours of the \textsf{BDR} (Proposition 1).
\end{enumerate}

It is also worth mentioning that the generalisation of the study for the whole set of H\"{o}lder means also applies well for multi-class problems. Moreover, it can be seen that achieving a high value of an adequate performance score for multi-minority problems is far more difficult than for multi-majority; $\mathds{I}^{\mathcal{S}}_K(\epsilon)$ is always higher for positive values of $\epsilon$. Finally, we want to point out the particularity that appears in Figure \ref{fig:multigm} ($\mathcal{G}$), in Figure \ref{fig:multihm} ($\mathcal{H}$) and in Figure \ref{fig:multimin} ($min$); $\mathds{I}^{\mathcal{S}}_K(\epsilon)$ is below $0$ for the negative values of $\epsilon$ near the balance situation, i.e. for $\epsilon \to 0^{-}$. In Section \ref{sec:st2}, we properly address this interesting particularity.

\section{Second Study: On the Inherent Scores of the Proposals for Unbalanced Domains}\label{sec:st2}

The fact that any learning task can be viewed as an optimisation problem leads us to the second unanswered question exposed in the introduction; {\it Since there are many different and diverse performance scores available, which performance scores are maximised in the most predominant learning solutions designed to deal with skewed classes? Are they the adequate performance scores detected in the first study?} To the best of our knowledge, little effort has been made in the literature towards answering these questions. Yet, the interpretation of previous works with regards to this issue seems to be in contradiction. On one hand, there are works claiming that there is no algorithmic solution to the class-imbalance problem since the \textsf{BDR} establishes a fundamental limit in the performance of any classifier \cite{Dru05}. The argument is that the \textsf{BDR} is asymptotically sought in this domain, i.e. all the solutions proposed in the class-imbalance domain can be categorised as traditional supervised learning approaches since they maximise $\mathcal{A}cc$ (by minimising the $0$-$1$ loss). So, therefore, no competitive classifier can be proposed for these problems. On the other hand, there are a large number of methodological contributions designed to overcome the intrinsic difficulties of this intricate domain. Moreover, most of these solutions report positive results \cite{He09}. This suggests that they produce classifiers which are maximised for a more appropriate score to the class imbalance scenario than  $\mathcal{A}cc$.

Therefore, in order to enlighten this apparent contradiction of statements and provide answers, we, now, theoretically scrutinise the state-of-the-art literature. Thus, we can devise which are the decision rules behind the proposals of the literature, their competitiveness and their inherently maximised performance scores.

\subsection{Major Solutions for the Class-imbalance Problem}

Concerning the unbalanced learning literature, most of its methodological contributions can be mainly categorised into four main kinds of approaches: (1) data sampling \cite{Bat04}, (2) cost-sensitive learning \cite{Liu06}, (3) algorithmic modification \cite{Gal12}, and (4) the use of ensembles \cite{Gal12}. In this paper, we study data sampling and cost-sensitive learning due to the following facts: (i) they dominate the current research efforts \cite{He09}. (ii) Both approaches are more transparent to the Bayesian decision theory since they never behave as blackboxes \cite{Rod14}. (iii) Additionally, there is a lack of a unified framework for the heterogeneous approaches of algorithmic modification and ensembles which impedes their categorisation and study \cite{Gal12}.

\subsubsection{Data Sampling}

By means of data sampling techniques, the training dataset is modified in order to provide a more balanced class distribution \cite{Sae16} so that, when classical supervised algorithms \cite{Wu07} are used in the learning process, the resulting classifiers are not biased towards the majority classes \cite{Bat04} \cite{Cha02}. In other words, this approach allows the traditional learning systems to learn from a {\it ``safe scenario''} where the probabilities of the classes hinder the learnt classifiers in an insignificant or null manner. The two simplest methods used are {\it random over-sampling} (ROS) and {\it random under-sampling} (RUS). While ROS balances the class distribution by the random replication of the examples of the minority classes of the training dataset, the balance distribution is achieved in RUS by the random removal of examples of the overrepresented classes. From a theoretical point of view, both methods are equivalent. They balance the class distribution up to having a uniform class distribution or, at least, a hardly noticeable unbalanced distribution. Formally, data sampling methods, instead of using the generative model as defined in eq. (\ref{genModel}), modify the training dataset in such a way they try to learn a classifier from the following model:
\begin{equation}
\rho'(\mathbf{x},c|\boldsymbol{\theta}) = p'(c)\rho(\mathbf{x}|c,\boldsymbol{\theta}).
\label{balanceModel}
\end{equation}
\noindent where $p'(c)$ is a multinomial distribution near the equiprobability, i.e. close to $\mathbf{e}$. Then, by directly applying the \textsf{BDR}, our surrogate of the traditional learning approach, over the modified generative model, we reach the following decision rule:
\begin{equation}
\label{datasamplingDR}
 \hat{c}_{B} = \arg\max_{i}\eta_i'\rho(\mathbf{x}|c_i,\boldsymbol{\theta}_i), \text{ where } \eta_i'\sim\frac{1}{K}.
\end{equation}
\noindent Thus, it can be concluded that the joint use of a data sampling technique and the \textsf{BDR} is practically equivalent (equal for $\boldsymbol{\eta}' = \mathbf{e}$) to use the equiprobable Bayes decision rule. The latter rule does not take into account the class distribution and it is defined as:

\begin{definition} Assuming $\rho(\mathbf{x}|c,\boldsymbol{\theta})$ and $\boldsymbol{\eta}$ to be known, the {\it equiprobable Bayes decision rule} (\textsf{EDR}) is given by
\begin{equation}
\label{EDRx}
 \hat{c}_{E} = \arg\max_{i}\rho(\mathbf{x}|c_i,\boldsymbol{\theta}_i).
\end{equation}
\end{definition}

Figure \ref{fig:sampling} summarises our theoretical reasoning behind the data sampling approach. However, when dealing with real-world problems, where the generative function is usually unknown, each method introduces its own set of problematic consequences that might hinder the learning task\cite{He09}; while ROS may produce over-fitting towards the minority classes, RUS may discard data which are potentially important to the classification process. In order to overcome these problems, some heuristic methods have been proposed in the literature; Tomek links \cite{Tom76}, condensed nearest neighbour rule (CNN) \cite{Har68}, one-side selection (OSS) \cite{Kub97}, synthetic minority over-sampling techniques (SMOTE) \cite{Cha02}, and combinations among them \cite{Bat04}. The main motivation behind some of these proposals is not only to balance the training data, but also to remove noisy examples lying on the wrong side of the decision region \cite{Bat04}. Due to its nature and despite  the fact that prior works \cite{He09}\cite{Gal12} have shown that data sampling is usually a positive practical solution, this methodology has mainly been criticised due to altering the original class distribution \cite{He09}, or even, the distribution of the feature space as happens with SMOTE.

\begin{figure}[t]
  \centering
  \includegraphics[width=9cm]{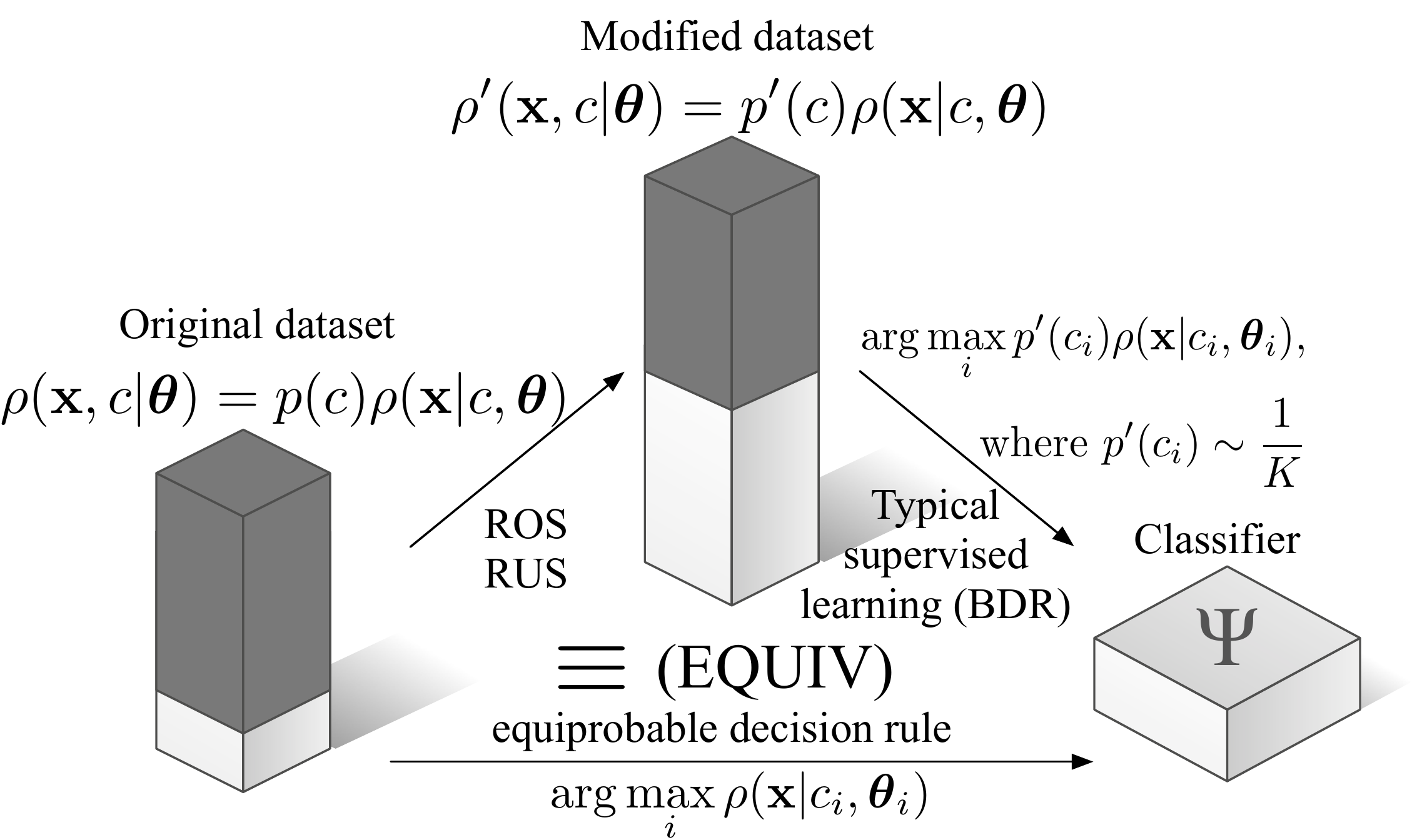}
  \caption{Data sampling techniques.}
  \label{fig:sampling}
\end{figure}

\subsubsection{Cost-sensitive Learning}

Cost-sensitive learning \cite{Kri11} is a paradigm which studies and solves classification problems where some types of misclassifications may be more crucial than others, e.g. rejecting a valid credit card transaction may cause an inconvenience while approving a large fraudulent transaction may have very negative consequences. Due to the fact that, in situations of disproportionate class probabilities, common learning systems tend to be overwhelmed by the majority classes ignoring the minority ones, the proposed cost-sensitive learning solutions usually assume that the cost of misclassifying an example from a minority class is higher than that of misclassifying an example from a majority class. Thereby, the ratio of correct classifications for the minority classes may be improved \cite{Liu06}\cite{Tha10}.

The concept of cost matrix appears in every cost-sensitive learning approach. The cost matrix is a numerical representation of the cost of classifying examples from one class to another. Formally, the cost matrix, $\mathbf{B} = [b_{i,j}]_{1 \leq i,j \leq K}$, is a square matrix of size $K$ where each element $b_{i,j}$ is the cost of classifying an example of class $c_i$ (rows) as class $c_j$ (columns). According to \cite{He09}, cost sensitive learning is superior to data sampling methods. However, although this framework can significantly improve the performance, it takes for granted the availability of a cost matrix and its associated cost items. Unfortunately, establishing a cost representation of a given domain can be particularly challenging and in some cases impossible. Moreover, this problem is exacerbated in the class-imbalance domain, where the misclassification costs are not intrinsic characteristics of the generative model but rather an artificial solution to seek an improvement in the classification of the minority classes. For that reason, several proposals in the literature appoint cost matrices for class-imbalance problems; e.g. \cite{Jap02} suggests the use of non-uniform error costs defined by means of the class-imbalance ratio presented in the dataset, i.e. $b_{i,j}= \eta_i/\eta_j$. Hence, the cost of misclassification of the underrepresented classes is higher than the overrepresented ones.  

Therefore, once the cost matrix is defined and assuming the generative model to be known, an example can be directly classified by means of the \textsf{BDR} for unequal misclassification costs\footnote{This rule differs from the \textsf{BDR} used in the manuscript in the fact that our version of the \textsf{BDR} (Definition 2) assumes equal misclassification costs.} (CS-\textsf{BDR})\cite{Ber85}. This rule minimises the expected misclassification costs as expressed in the cost matrix instead of a $0$-$1$ loss as the traditional solutions seek. 

Next, by assuming\footnote{We choose this method for convenience and clarity in the calculi. Other works, e.g. \cite{Liu06}, which are equivalent under our theoretical perspective, employ different methods to obtain these cost matrices so that the bias towards the majority group can be reduced\cite{Sae16}.} the method proposed by Japkowicz and Stephen \cite{Jap02} to establish a cost matrix for the class-imbalance problem, the CS-\textsf{BDR}, which is optimal for the H\"{o}lder mean with $p=1$ and $\zeta_i=W_i\eta_i$, can be formally defined as
\begin{equation}
 \hat{c}_{B} = \arg\max_{i}W_i\eta_i\rho(\mathbf{x}|c_i,\boldsymbol{\theta}_i), \text{ where }  W_i=\sum_{j =1}^K \frac{1}{b_{i,j}}.
 \label{costsensiDR1}
\end{equation}
\noindent In order to discover the actual decision rule under the proposal of \cite{Jap02}, and by extension to other cost-sensitive methods, we now just substitute the values of the $W_i$ in eq. (\ref{costsensiDR1}):
\begin{equation}
 \hat{c}_{B} = \arg\max_{i}\eta_i\sum_{j =1}^K \frac{1}{b_{i,j}}\rho(\mathbf{x}|c_i,\boldsymbol{\theta}_i) = \arg\max_{i}\rho(\mathbf{x}|c_i,\boldsymbol{\theta}_i).
 \label{costsensiDR2}
\end{equation}
\noindent As can be seen with this approach, the \textsf{EDR} is also reached.

\subsection{An Analysis of the Equiprobable Decision Rule}

In the previous paragraphs, it is shown that the asymptotic decision rule sought in most of the learning systems proposed in the literature for the class-imbalance scenario is the \textsf{EDR} (Definition 4). Additionally, we are confident enough to think that, also for the approaches of algorithm modification and ensembles, as long as their intention is to overcome the hindering behaviours expressed in Section 3.2 by reducing the bias of the traditional learners towards the majority classes, they will seek a decision rule which will be, if not the same, similar to the \textsf{EDR}. For that reason, in this section, we take a step further and study the main properties of this rule.

\subsubsection{Competitiveness in the Class-Imbalance Scenario}

First, we deal with the problem of determining whether the \textsf{EDR} is a competitive classifier to the class-imbalance domain. Opportunely, both the proposed influence function, eq. (\ref{imbfeat}), and the fact that the \textsf{EDR} can be viewed as a particular case of the \textsf{BDR} can be used to shed some light into this issue. Regarding the latter, we focus on the following relationship between both decision rules:

\begin{proposition} Let $\mathbf{A}_{\Psi}(\boldsymbol{\eta},\boldsymbol{\theta})$ be the true confusion matrix resulting from applying the algorithm $\Psi$ to a classification problem $\gamma_K$ with generative function equal to eq. (\ref{genModel}) and with parameters $\boldsymbol{\eta}$ and $\boldsymbol{\theta}$. Hence, when the generative model is known, for both the \textsf{BDR} ($\Psi=B$) and the \textsf{EDR} ($\Psi=E$), it holds true that
\begin{equation}
\forall \boldsymbol{\eta}, \mathbf{A}_{B}(\boldsymbol{e},\boldsymbol{\theta}) = \mathbf{A}_{E}(\boldsymbol{\eta},\boldsymbol{\theta}).
\label{relationCM}
\end{equation}
\end{proposition}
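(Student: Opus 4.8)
The plan is to reduce the whole statement to a single fact about decision regions: once I show that the \textsf{BDR} applied to the balanced problem and the \textsf{EDR} applied to an arbitrary $\boldsymbol{\eta}$ partition the feature space into exactly the same regions $\Omega_1,\dots,\Omega_K$, the equality (\ref{relationCM}) will follow by inspection of the defining formula (\ref{tcmbayes}), each entry being an integral of the shared conditional density $\rho(\mathbf{x}|c_i,\boldsymbol{\theta}_i)$ over one of those regions.

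First I would show that at $\boldsymbol{\eta}=\mathbf{e}$ the \textsf{BDR} collapses onto the \textsf{EDR}. Substituting $\eta_i=1/K$ into (\ref{BDR}) gives $\hat{c}_B=\arg\max_i \tfrac{1}{K}\rho(\mathbf{x}|c_i,\boldsymbol{\theta}_i)$, and since the positive constant $1/K$ is common to every term it does not affect the maximiser, so $\hat{c}_B=\arg\max_i \rho(\mathbf{x}|c_i,\boldsymbol{\theta}_i)=\hat{c}_E$. At the level of regions, setting $\eta_i=\eta_{i'}=1/K$ in (\ref{region}) turns $\Omega_j$ into $\{\mathbf{x}:\rho(\mathbf{x}|c_j,\boldsymbol{\theta}_j)-\max_{j'\neq j}\rho(\mathbf{x}|c_{j'},\boldsymbol{\theta}_{j'})>0\}$, which is precisely the region in which (\ref{EDRx}) assigns class $c_j$.

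Second I would observe that the \textsf{EDR} ignores the class distribution by construction: neither (\ref{EDRx}) nor its induced region depends on $\boldsymbol{\eta}$. Hence the partition produced by the \textsf{EDR} under an arbitrary $\boldsymbol{\eta}$ is the same as the one it produces under $\mathbf{e}$, and by the first step this coincides with the partition produced by the balanced \textsf{BDR}. Writing $\Omega_j$ for this common region and substituting into (\ref{tcmbayes}) for both rules, every entry is an integral of the same integrand $\rho(\mathbf{x}|c_i,\boldsymbol{\theta}_i)$ over the same domain $\Omega_j$, which delivers the desired identity.

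The step that needs the most care — and the main obstacle — is the prior weighting carried by (\ref{tcmbayes}): the balanced \textsf{BDR} scales the $i$-th row by $1/K$ whereas the \textsf{EDR} under $\boldsymbol{\eta}$ scales it by $\eta_i$. The content of the statement is therefore that the class-conditional classification probabilities $\int_{\Omega_j}\rho(\mathbf{x}|c_i,\boldsymbol{\theta}_i)\,d\mathbf{x}$ agree for the two rules, so that in particular every recall $\mathcal{R}^i=a_{i,i}\big(\sum_{j}a_{i,j}\big)^{-1}$ coincides, the common per-row prior cancelling in the ratio. Since the downstream H\"older-mean comparison of the \textsf{EDR} against the \textsf{BDR} relies precisely on this recall identity, I would make sure the conclusion is phrased in terms of the quantities that are invariant under the per-row scaling, and verify the region coincidence carefully at the (measure-zero) boundaries where the $\arg\max$ ties, which do not affect the integrals.
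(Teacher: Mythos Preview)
The paper does not actually prove this proposition; a footnote declares it trivial and leaves it to ``simple algebra from the proposed mathematical framework.'' Your approach --- show that the decision regions of the balanced \textsf{BDR} and of the \textsf{EDR} coincide, then read off the confusion-matrix entries via (\ref{tcmbayes}) --- is exactly the intended argument, and the region equality you give is correct.

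Your caution about the $\eta_i$ prefactor is warranted and, in fact, more careful than the paper. As literally written, $a_{i,j}=\eta_i\int_{\Omega_j}\rho(\mathbf{x}|c_i,\boldsymbol{\theta}_i)\,d\mathbf{x}$ would make the $i$-th row of $\mathbf{A}_B(\mathbf{e},\boldsymbol{\theta})$ scaled by $1/K$ and the $i$-th row of $\mathbf{A}_E(\boldsymbol{\eta},\boldsymbol{\theta})$ scaled by $\eta_i$, so the matrices would differ by a row rescaling whenever $\boldsymbol{\eta}\neq\mathbf{e}$. The paper's own definition is internally inconsistent on this point: the left-hand expression in (\ref{tcmbayes}), $\mathbb{E}_{\mathbf{x}|c=c_i}[\hat c_B=c_j]$, is a \emph{conditional} probability and should not carry $\eta_i$. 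Every downstream use of the proposition (Corollaries~1--3, Theorem~1, and the rewriting of the influence function) goes through recalls or other row-normalised quantities in which the prefactor cancels. So your resolution --- interpret (\ref{relationCM}) as an equality of class-conditional classification probabilities $\int_{\Omega_j}\rho(\mathbf{x}|c_i,\boldsymbol{\theta}_i)\,d\mathbf{x}$, equivalently of recall vectors --- is the right reading, and under it your proof is complete.
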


This proposition\footnote{Proofs for this proposition and its corollaries are not included in the manuscript due to their triviality; they can be easily inferred by using simple algebra from the proposed mathematical framework.} highlights the fact that, since the \textsf{EDR} is a particular case of the \textsf{BDR} (when the \textsf{BDR} is applied to the balanced version of $\gamma_K$), it results in a constant confusion matrix through the whole spectrum of the class distribution and its value is equal to the one resulting from the \textsf{BDR} in that particular case. Additionally, this relationship between both decision rules has an effect on the studied numerical performance scores:

\begin{corollary} Let $\mathcal{S}_{\Psi}(\boldsymbol{\eta},\boldsymbol{\theta})$ be a numerical performance score summarising the true confusion matrix  $\mathbf{A}_{\Psi}$. Then, 
\begin{equation}
\mathcal{S}_{B}(\boldsymbol{e},\boldsymbol{\theta}) = \mathcal{S}_{E}(\boldsymbol{e},\boldsymbol{\theta})
\label{relationPS}
\end{equation}
\noindent is true in all circumstances. 
\end{corollary}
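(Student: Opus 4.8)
The plan is to obtain the statement as an immediate consequence of Proposition 1, using only the fact that a numerical performance score is, by construction, a deterministic summary of the confusion matrix. Since $\mathcal{S}_{\Psi}(\boldsymbol{\eta},\boldsymbol{\theta})$ is built from the entries $a_{i,j}$ of $\mathbf{A}_{\Psi}(\boldsymbol{\eta},\boldsymbol{\theta})$ through one of the closed-form expressions of Table \ref{scores}, there is a single functional $f$ --- the same for every classifier, since the formula never refers to $\boldsymbol{\eta}$, $\boldsymbol{\theta}$ or $\Psi$ except through those entries --- such that $\mathcal{S}_{\Psi}(\boldsymbol{\eta},\boldsymbol{\theta}) = f\!\left(\mathbf{A}_{\Psi}(\boldsymbol{\eta},\boldsymbol{\theta})\right)$.

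First I would take the identity of Proposition 1, valid for every $\boldsymbol{\eta}$, and specialise it to the balanced distribution $\boldsymbol{\eta} = \mathbf{e}$, giving
\begin{equation*}
\mathbf{A}_{B}(\boldsymbol{e},\boldsymbol{\theta}) = \mathbf{A}_{E}(\boldsymbol{e},\boldsymbol{\theta}).
\end{equation*}
Then I would apply the functional $f$ to both sides, so that
\begin{equation*}
\mathcal{S}_{B}(\boldsymbol{e},\boldsymbol{\theta}) = f\!\left(\mathbf{A}_{B}(\boldsymbol{e},\boldsymbol{\theta})\right) = f\!\left(\mathbf{A}_{E}(\boldsymbol{e},\boldsymbol{\theta})\right) = \mathcal{S}_{E}(\boldsymbol{e},\boldsymbol{\theta}),
\end{equation*}
which is precisely the claimed equality. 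The underlying reason Proposition 1 applies here is transparent: at $\boldsymbol{\eta}=\mathbf{e}$ every prior equals $1/K$, so the common factor cancels inside the $\arg\max$ and the \textsf{BDR} coincides pointwise with the \textsf{EDR}, forcing their confusion matrices --- and hence every score computed from them --- to agree.

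There is no genuine obstacle here; the result is essentially definitional. The only point meriting a sentence of care is the claim that $\mathcal{S}$ factors through the confusion matrix via a classifier-independent functional $f$. This is guaranteed by the definitions in Section \ref{sec:not}, where each score in Table \ref{scores} is written purely in terms of the $a_{i,j}$. Consequently two classifiers sharing the same confusion matrix are indistinguishable to any such score, and the corollary follows without further computation.
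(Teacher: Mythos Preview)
Your argument is correct and is precisely the trivial derivation the paper intends: the authors explicitly omit a proof, noting in a footnote that the proposition and its corollaries ``can be easily inferred by using simple algebra from the proposed mathematical framework.'' Specialising the confusion-matrix identity to $\boldsymbol{\eta}=\mathbf{e}$ and then invoking that every score in Table~\ref{scores} is a fixed functional of the matrix entries is exactly that simple algebra, so there is nothing to add or correct.
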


\begin{corollary} When a numerical performance score $\mathbf{S}_{\Psi}$ summarises the true confusion matrix $\mathbf{A}_{\Psi}$ with independence\footnote{No class probability is used in the calculation of the value of the score.} of the class distribution $\boldsymbol{\eta}$, the relation of Proposition 2 can also be held to be true for $\mathcal{S}_{\Psi}$. That is,
\begin{equation}
\forall \boldsymbol{\eta}, \mathcal{S}_{B}(\boldsymbol{e},\boldsymbol{\theta}) = \mathcal{S}_{E}(\boldsymbol{\eta},\boldsymbol{\theta}).
\label{relationAdeqPS}
\end{equation}
\end{corollary}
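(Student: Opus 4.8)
The plan is to exploit the feature that, under the stated independence hypothesis, the score is determined \emph{solely} by the confusion matrix, and then to push the matrix identity of Proposition~2 through it. The whole content will sit in Proposition~2; the corollary is just the statement that a well-defined function returns equal outputs on equal inputs.

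First I would make the independence assumption precise. To say that $\mathcal{S}_{\Psi}$ is computed ``with independence of the class distribution $\boldsymbol{\eta}$'' is to say that there is a single map $f$ from $K\times K$ confusion matrices to $\mathds{R}$ such that $\mathcal{S}_{\Psi}(\boldsymbol{\eta},\boldsymbol{\theta}) = f\!\left(\mathbf{A}_{\Psi}(\boldsymbol{\eta},\boldsymbol{\theta})\right)$ for every rule $\Psi$ and every pair $(\boldsymbol{\eta},\boldsymbol{\theta})$; crucially, $\boldsymbol{\eta}$ is allowed to enter only through its effect on the entries $a_{i,j}$ and never as a separate weight. Once this factorisation is in place the argument is a two-line substitution: evaluating it at $\Psi=B$ with $\boldsymbol{\eta}=\boldsymbol{e}$ gives $\mathcal{S}_{B}(\boldsymbol{e},\boldsymbol{\theta})=f\!\left(\mathbf{A}_{B}(\boldsymbol{e},\boldsymbol{\theta})\right)$, while evaluating it at $\Psi=E$ with an arbitrary $\boldsymbol{\eta}$ gives $\mathcal{S}_{E}(\boldsymbol{\eta},\boldsymbol{\theta})=f\!\left(\mathbf{A}_{E}(\boldsymbol{\eta},\boldsymbol{\theta})\right)$. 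Proposition~2 asserts $\mathbf{A}_{B}(\boldsymbol{e},\boldsymbol{\theta})=\mathbf{A}_{E}(\boldsymbol{\eta},\boldsymbol{\theta})$ for all $\boldsymbol{\eta}$, so applying the common map $f$ to both sides and comparing with the two evaluations yields $\mathcal{S}_{B}(\boldsymbol{e},\boldsymbol{\theta})=\mathcal{S}_{E}(\boldsymbol{\eta},\boldsymbol{\theta})$, which is exactly \eqref{relationAdeqPS}.

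I do not expect a genuinely hard step here. The only point demanding care is the formalisation of the independence hypothesis: one must check that the chosen score really does factor through $\mathbf{A}_{\Psi}$ alone. For the recall-based H\"older means this is immediate, since each recall $\mathcal{R}^{i}=a_{i,i}\big(\sum_j a_{i,j}\big)^{-1}$ is a function of the matrix entries only and the weights $\zeta_i=1/K$ carry no dependence on $\boldsymbol{\eta}$; the whole score is therefore a composition of functions of $\mathbf{A}_{\Psi}$. For a weighted score such as $\mathcal{A}cc=\sum_i \eta_i \mathcal{R}^{i}$ the hypothesis fails, because the $\eta_i$ appear as explicit weights outside the matrix, and, consistently, the stronger identity \eqref{relationAdeqPS} is then unavailable and one is left only with the weaker \eqref{relationPS}. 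This contrast is worth stating explicitly, as it is precisely what makes the corollary sharp.
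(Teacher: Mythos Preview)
Your argument is correct and is exactly the ``simple algebra'' the paper has in mind: the authors explicitly omit the proof of this corollary (and of Proposition~2 and its other corollaries) as trivial, noting in a footnote that it follows immediately from the framework. Your factorisation $\mathcal{S}_{\Psi}=f(\mathbf{A}_{\Psi})$ together with Proposition~2 is precisely that inference, and your closing remark contrasting the unweighted means with $\mathcal{A}cc$ is also faithful to the paper's discussion around Corollaries~2 and~3.
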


Note that, in our framework, the numerical performance scores whose summary is independent to $\boldsymbol{\eta}$ are the local scores (recall and precision), the unweighted H\"{o}lder means among those local scores and the weighted H\"{o}lder means among those local scores whose weights are not calculated on the class distribution.

\begin{corollary} The value of classification accuracy, a score which is summarised using the distribution of the classes, for the \textsf{BDR} on the balanced version of a problem is equal to the value of the arithmetic mean among the recalls obtained by the \textsf{EDR} in $\gamma_K$ showing any value of class imbalance. Formally,
\begin{equation}
\forall \boldsymbol{\eta}, \mathcal{A}cc_{B}(\boldsymbol{e},\boldsymbol{\theta}) = \mathcal{A}_{E}(\boldsymbol{\eta},\boldsymbol{\theta}).
\label{relationAcc}
\end{equation}
\end{corollary}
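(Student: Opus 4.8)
The plan is to reduce this corollary to a one-line consequence of the confusion-matrix identity (eq.~(\ref{relationCM})) together with the invariance statement for $\boldsymbol{\eta}$-independent scores (eq.~(\ref{relationAdeqPS})). The single genuine observation is that classification accuracy and the arithmetic mean of the recalls are the \emph{same} weighted H\"older mean with $p=1$, differing only in their weights: $\mathcal{A}cc$ uses $\zeta_i=\eta_i$, the class probabilities of the problem under assessment, whereas $\mathcal{A}$ uses the uniform weights $\zeta_i=1/K$. These two weight vectors coincide \emph{exactly} on the balanced problem, where $\boldsymbol{\eta}=\mathbf{e}$ forces $\eta_i=1/K$ for every $i$. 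This coincidence is what ties the two scores together.

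First I would record the balanced-case collapse
\[ \mathcal{A}cc_{B}(\boldsymbol{e},\boldsymbol{\theta}) = \sum_{i=1}^{K}\eta_i\,\mathcal{R}^{i}_{B}(\boldsymbol{e},\boldsymbol{\theta}) = \sum_{i=1}^{K}\frac{1}{K}\,\mathcal{R}^{i}_{B}(\boldsymbol{e},\boldsymbol{\theta}) = \mathcal{A}_{B}(\boldsymbol{e},\boldsymbol{\theta}), \]
where the middle equality is merely the substitution $\eta_i=1/K$, valid because the score is being evaluated on the balanced version of $\gamma_K$. This already rewrites the left-hand side of the claim as an arithmetic mean of recalls for the \textsf{BDR} in the balanced scenario. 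Next I would note that $\mathcal{A}$ is an unweighted H\"older mean among the recalls, hence one of the scores whose summary does not involve the class distribution $\boldsymbol{\eta}$; it therefore falls under the hypothesis of the preceding invariance corollary, and eq.~(\ref{relationAdeqPS}) gives, for every $\boldsymbol{\eta}$,
\[ \mathcal{A}_{B}(\boldsymbol{e},\boldsymbol{\theta}) = \mathcal{A}_{E}(\boldsymbol{\eta},\boldsymbol{\theta}). \]
Chaining the two displays yields $\mathcal{A}cc_{B}(\boldsymbol{e},\boldsymbol{\theta}) = \mathcal{A}_{E}(\boldsymbol{\eta},\boldsymbol{\theta})$ for all $\boldsymbol{\eta}$, which is precisely the assertion.

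Alternatively, I could argue straight from the confusion-matrix identity (eq.~(\ref{relationCM})) without passing through the intermediate corollaries: since $\mathcal{R}^{i}=a_{i,i}\bigl(\sum_{j=1}^{K}a_{i,j}\bigr)^{-1}$ is a pure function of the entries of the confusion matrix, the identity $\mathbf{A}_{B}(\boldsymbol{e},\boldsymbol{\theta})=\mathbf{A}_{E}(\boldsymbol{\eta},\boldsymbol{\theta})$ transfers termwise to the recalls, $\mathcal{R}^{i}_{B}(\boldsymbol{e},\boldsymbol{\theta})=\mathcal{R}^{i}_{E}(\boldsymbol{\eta},\boldsymbol{\theta})$; averaging both sides with the uniform weights $1/K$ and using $\mathcal{A}cc_{B}(\boldsymbol{e},\boldsymbol{\theta})=\sum_{i}\tfrac{1}{K}\mathcal{R}^{i}_{B}(\boldsymbol{e},\boldsymbol{\theta})$ then closes the argument. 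There is no real obstacle here, as the statement is essentially bookkeeping; the one point that must be handled with care is that the collapse $\mathcal{A}cc=\mathcal{A}$ is a feature of the balanced problem \emph{only}, since it relies on $\eta_i=1/K$ and must not be applied at the general distribution $\boldsymbol{\eta}$ appearing on the right-hand side, whose role is taken up entirely by the $\boldsymbol{\eta}$-invariance of $\mathcal{A}$ under the \textsf{EDR}.
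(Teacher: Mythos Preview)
Your argument is correct and is precisely the ``simple algebra'' the paper alludes to when it omits the proofs of Proposition~2 and its corollaries: collapse $\mathcal{A}cc_B(\mathbf{e},\boldsymbol{\theta})$ to $\mathcal{A}_B(\mathbf{e},\boldsymbol{\theta})$ using $\eta_i=1/K$, then invoke the $\boldsymbol{\eta}$-invariance of $\mathcal{A}$ under the \textsf{EDR} from eq.~(\ref{relationAdeqPS}). There is nothing to add.
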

Now, by substituting eq. (\ref{relationAdeqPS}) in eq. (\ref{imbfeat}); the influence function for the adequate performance scores can be rewritten as:
\begin{equation}
\mathds{I}^{\mathcal{S}}_K(\boldsymbol{\eta})= \displaystyle\int\limits_{\boldsymbol{\Theta}}{}[\mathcal{S}_{E}(\boldsymbol{\eta},\boldsymbol{\theta})-\mathcal{S}_{B}(\boldsymbol{\eta},\boldsymbol{\theta})]d\boldsymbol{\theta}.
\label{imbfeat2}
\end{equation}
By means of this transformation, this new version of the influence function can be used to study whether the \textsf{BDR} or the \textsf{EDR} has, on average, a superior behaviour with regards to a determined adequate performance score ($\mathcal{A}$, $\mathcal{G}$, $\mathcal{H}$, and $min$). If the influence function is positive for the whole range of $\boldsymbol{\eta}$, then the \textsf{EDR} behaves, on average, better for that performance score than the \textsf{BDR}. The opposite case, a whole negative influence function, will show that the \textsf{BDR} is superior to the \textsf{EDR}. From just a re-examination of Figure \ref{fig:multi2} (\ref{fig:multiam}-\ref{fig:multimin}), where the influence function is studied in ternary problems and displayed for the performance scores, the following conclusion can be extracted: since $\mathds{I}^{\mathcal{S}}_K(\boldsymbol{\eta})$ for the adequate scores is positive for ``almost'' the whole range of $\boldsymbol{\eta}$, {\it the \textsf{EDR} is, in general and on average, more competitive than the \textsf{BDR}}. Therefore, the \textsf{EDR} tears apart the fundamental limit in the performance of every algorithm solution established by the \textsf{BDR} and claimed in \cite{Dru05}. 

Finally, we want to remark that, although in theory there is no difference between dealing with binary or multi-class domains, in real world problems, most of the solutions seeking the \textsf{EDR} for unbalanced problems have been shown to be less effective or even to cause a negative effect in dealing with multiple classes \cite{Wan12}\cite{Sae16}.

\subsubsection{Optimality of the \textsf{EDR}}

In order to determine the performance score optimised in the \textsf{EDR}, we also take advantage of the definition of the influence function and the relationships between both decision rules:
\begin{thm} The equiprobable Bayes decision rule is optimal for $\mathcal{A}$, the unweighted H\"{o}lder mean among the recalls with $p=1$.
\end{thm}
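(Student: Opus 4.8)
The plan is to show that for the equiprobable Bayes decision rule (\textsf{EDR}), no other classifier can achieve a strictly higher value of $\mathcal{A}$, the unweighted arithmetic mean of the recalls. The key observation is that $\mathcal{A}$ depends only on the recalls $\mathcal{R}^i$, which by definition are the diagonal entries of the row-normalised confusion matrix, i.e. $\mathcal{R}^i = \int_{\Omega_i'} \rho(\mathbf{x}|c_i,\boldsymbol{\theta}_i)\,d\mathbf{x}$, where $\Omega_i'$ is the region a generic classifier $\Psi$ assigns to class $c_i$. Crucially, $\mathcal{R}^i$ carries \emph{no} weighting by $\eta_i$: the class prior cancels in the row normalisation. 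Therefore maximising $\mathcal{A} = \frac{1}{K}\sum_i \mathcal{R}^i$ is equivalent to the unconstrained pointwise problem of choosing, for each $\mathbf{x}$, the class whose conditional density $\rho(\mathbf{x}|c_i,\boldsymbol{\theta}_i)$ is largest.

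The core step is a pointwise optimisation argument. I would write
\begin{equation}
\mathcal{A} = \frac{1}{K}\sum_{i=1}^K \int_{\Omega_i'} \rho(\mathbf{x}|c_i,\boldsymbol{\theta}_i)\,d\mathbf{x} = \frac{1}{K}\int \sum_{i=1}^K \mathds{1}[\mathbf{x}\in\Omega_i']\,\rho(\mathbf{x}|c_i,\boldsymbol{\theta}_i)\,d\mathbf{x},
\end{equation}
where $\{\Omega_i'\}$ is any measurable partition of the feature space induced by $\Psi$. Since the regions partition the space, at each $\mathbf{x}$ the integrand selects exactly one term $\rho(\mathbf{x}|c_i,\boldsymbol{\theta}_i)$. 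The integrand is therefore maximised pointwise by assigning $\mathbf{x}$ to the class $i$ maximising $\rho(\mathbf{x}|c_i,\boldsymbol{\theta}_i)$, which is precisely the assignment rule of the \textsf{EDR} given in eq.~(\ref{EDRx}). Integrating a pointwise-maximal integrand yields the maximal value of the integral, so $\mathcal{A}_E(\boldsymbol{\eta},\boldsymbol{\theta}) \geq \mathcal{A}_\Psi(\boldsymbol{\eta},\boldsymbol{\theta})$ for every classifier $\Psi$, establishing optimality.

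Alternatively, I would note this follows immediately from results already in the excerpt: the ordinary \textsf{BDR} maximises classification accuracy $\mathcal{A}cc = \sum_i \eta_i \mathcal{R}^i$, which for the balanced problem $\boldsymbol{\eta}=\mathbf{e}$ reduces to $\frac{1}{K}\sum_i \mathcal{R}^i = \mathcal{A}$. Hence the \textsf{BDR} applied to the balanced version of the problem maximises $\mathcal{A}$, and by Corollary~4, $\mathcal{A}cc_B(\mathbf{e},\boldsymbol{\theta}) = \mathcal{A}_E(\boldsymbol{\eta},\boldsymbol{\theta})$ for all $\boldsymbol{\eta}$, so the \textsf{EDR} achieves the same optimal value on every problem. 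Since $\mathcal{A}$ is independent of the class distribution (Corollary~3), optimality transfers to arbitrary $\boldsymbol{\eta}$.

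The main obstacle is being careful about the \emph{class of competitors} and the exact optimality claim. The pointwise argument shows the \textsf{EDR} maximises $\mathcal{A}$ among all classifiers, but one must confirm that the recall $\mathcal{R}^i$ is genuinely unweighted by the prior, which is the crux that distinguishes $\mathcal{A}$ from $\mathcal{A}cc$ and makes the \textsf{EDR} rather than the \textsf{BDR} optimal. A secondary subtlety is handling ties on the boundary $\{\mathbf{x}: \rho(\mathbf{x}|c_i,\boldsymbol{\theta}_i) = \rho(\mathbf{x}|c_j,\boldsymbol{\theta}_j)\}$; under the identifiable continuous (Gaussian) model assumed here these boundary sets have measure zero, so any tie-breaking convention leaves $\mathcal{A}$ unchanged and the argument goes through cleanly.
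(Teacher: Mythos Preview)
Your second (``alternatively'') argument is essentially the paper's own proof: the paper starts from the optimality of the \textsf{BDR} for $\mathcal{A}cc$, specialises to $\boldsymbol{\eta}=\mathbf{e}$ so that $\mathcal{A}cc$ collapses to $\mathcal{A}$, and then invokes the corollary $\mathcal{A}cc_B(\mathbf{e},\boldsymbol{\theta})=\mathcal{A}_E(\boldsymbol{\eta},\boldsymbol{\theta})$ to conclude. (Your corollary numbers are off by one relative to the paper: the independence statement is Corollary~2 and the $\mathcal{A}cc_B(\mathbf{e},\boldsymbol{\theta})=\mathcal{A}_E(\boldsymbol{\eta},\boldsymbol{\theta})$ statement is Corollary~3.)

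Your primary argument, the direct pointwise maximisation of the integrand $\sum_i \mathds{1}[\mathbf{x}\in\Omega_i']\,\rho(\mathbf{x}|c_i,\boldsymbol{\theta}_i)$, is a genuinely different and more self-contained route. It does not need to import the optimality of the \textsf{BDR} for $\mathcal{A}cc$ as a black box, nor the chain of corollaries relating the two rules; instead it re-derives optimality from scratch for $\mathcal{A}$ and identifies the maximiser as the \textsf{EDR} in one step. The paper's route has the advantage of making explicit the structural link between $\mathcal{A}cc$ on the balanced problem and $\mathcal{A}$ on the original problem, which is thematically useful for the paper's narrative; your pointwise route is shorter and makes transparent \emph{why} the prior $\boldsymbol{\eta}$ is irrelevant (it cancels in the recall), which is the conceptual heart of the result. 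Both are correct; your observation about measure-zero tie sets is a nice piece of hygiene that the paper leaves implicit.
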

\begin{proof}
Let the classification accuracy and the unweighted arithmetic mean among the recalls obtained by a classifier $\Psi$ in a classification problem $\gamma_K$ with a generative function determined by eq. (\ref{genModel}) and with parameters $\boldsymbol{\eta}$ and $\boldsymbol{\theta}$ be defined as
$$\mathcal{A}cc_\Psi(\boldsymbol{\eta},\boldsymbol{\theta})=\sum_{i=1}^K\eta_i\mathcal{R}_\Psi^i\text{, and }\mathcal{A}_\Psi(\boldsymbol{\eta},\boldsymbol{\theta})=\sum_{i=1}^K\frac{1}{K}\mathcal{R}_\Psi^i\text{, respectively.}$$
\noindent Also let the \textsf{BDR} and \textsf{EDR} be denoted as $\Psi=B$ and $\Psi=E$, respectively. Then, since the \textsf{BDR} obtains the optimal classification accuracy in every problem, it holds that:
$$\forall(\Psi,\boldsymbol{\eta},\boldsymbol{\theta}), \mathcal{A}cc_B(\boldsymbol{\eta},\boldsymbol{\theta}) = \max\{\mathcal{A}cc_\Psi(\boldsymbol{\eta},\boldsymbol{\theta})\}.$$
\noindent If the \textsf{BDR} is applied to the balanced version of the classification problem $\gamma_K$, i.e $\forall i, \eta_i= K^{-1}$, we obtain
$$\forall(\Psi,\boldsymbol{\eta},\boldsymbol{\theta}), \mathcal{A}cc_B(\boldsymbol{e},\boldsymbol{\theta}) = \max\{\sum_{i=1}^K\frac{1}{K}\mathcal{R}_\Psi^i\}.$$
Next, by the definition of $\mathcal{A}$, the $\max$ function can be rewritten as
$$\forall(\Psi,\boldsymbol{\eta},\boldsymbol{\theta}), \mathcal{A}cc_B(\boldsymbol{e},\boldsymbol{\theta}) = \max\{\sum_{i=1}^K\frac{1}{K}\mathcal{R}_\Psi^i\}=\max\{\mathcal{A}_\Psi(\boldsymbol{\eta},\boldsymbol{\theta})\}.$$
Finally, by Corollary 3, we reach the conclusion that the \textsf{EDR} optimises the unweighted arithmetic mean among the recalls:
$$\forall(\Psi,\boldsymbol{\eta},\boldsymbol{\theta}), \mathcal{A}_E(\boldsymbol{\eta},\boldsymbol{\theta}) = \max\{\mathcal{A}_\Psi(\boldsymbol{\eta},\boldsymbol{\theta})\}.$$
\end{proof}

This theorem manifests that the \textsf{EDR} achieves the lowest upper bound for $\mathcal{A}$, an adequate performance to the class imbalance extent, of any classifier. Therefore, most of the practical contributions to the class-imbalance scenario inherently also maximise this performance score. Unfortunately, this decision rule is not optimal for the other adequate performance scores ($\mathcal{G}$, $\mathcal{H}$ and $min$). This non-optimality can be straightforwardly proven using the influence function as expressed in eq. (\ref{imbfeat2}); both positive and negative values for $\mathds{I}^{\mathcal{S}}_K(\boldsymbol{\eta})$ will prove that neither the \textsf{BDR} nor the \textsf{EDR} always behave, on average, better than the other. Then, by just having a look at the previously indicated singularities of Figure \ref{fig:multigm} ($\mathcal{G}$), Figure \ref{fig:multihm} ($\mathcal{H}$) and Figure \ref{fig:multimin} ($min$), it can be seen that neither the \textsf{EDR} nor the \textsf{BDR} are optimal for these performance scores. For greater values of $K$ than in Figure \ref{fig:multi2}, the non-optimality holds, yet, the absolute values of these singularities are smaller. By having a glance at Figure \ref{fig:binary2}, we cannot say whether \textsf{EDR} is optimal for these performance scores in binary domains. However, by relaxing the geometry assumption\footnote{In Figure \ref{fig:binary2}, the behaviour of the \textsf{EDR} is, in fact, optimal for any unweighted H\"{o}lder mean with $p \leq 1$. This is due to the fact of that the geometry created in the model has equal variances and the same distance between adjacent means and the H\"{o}lder mean equality; the overlapping area for each feature space is equal for both classes. Unfortunately, even in this geometric scenario, for $p > 1$, the \textsf{EDR} is not optimal.} of equal unit variances in our model, we reach the same conclusion as in multi-class; {\it in general, for the unweighted H\"{o}lder means among the recalls with $p \neq 1$, \textsf{EDR} is not an optimal decision rule.} 

\subsection{Discussion on Maximising Other Scores beyond $\mathcal{A}$}

Several ideas regarding whether the remaining unweighted H\"{o}lder means among the recalls can be used to direct the definition of the forthcoming learning solutions for the class-imbalance domain can be extracted from the analyses performed in previous sections:

\begin{enumerate}[label=(\roman*)]
\item Provided that learning algorithms grouped by the \textsf{EDR} report positive results in the literature, and that the H\"{o}lder means with $p < 1$  have a higher sensitivity to the imbalance extent than $\mathcal{A}$, visibly, classifiers more adequate to deal with the class-imbalance related problems can be proposed\footnote{Since they have greater values for $\mathds{I}^{\mathcal{S}}_K(\boldsymbol{\eta})$, a classifier maximising these scores will behave, on average, better than a classifier maximising $\mathcal{A}$.}. In this range of $p$, (optimal) classifiers maximising $\mathcal{G}$, $\mathcal{H}$ and $min$, among other means, could be proposed. The optimal classifier for the latter will be the most restrictive that can be defined in this framework due to the fact that it ensures than the minimum recall of all classes must be the highest. Probably, a desired property (inferred from \cite{Fer11}) within this complex domain. 
\item On the contrary, the use of H\"{o}lder means with $p > 1$ in the definition of learning systems is not an adequate solution. These scores favour the greater recalls, one class will always have more chances to be selected in the classification. Moreover, this class rarely coincides with the minority class. In the extreme, we have the unweighted H\"{o}lder mean with $p=\infty$, whose optimal classifier is the one which classifies every instance to just one class. Therefore, this set of performance scores must be avoided to define loss functions in unbalanced domains, or at least, they must be used with special care. 
\end{enumerate}

In conclusion, any practical classifier $\Psi$ resulting from the maximisation of a H\"{o}lder mean with $p \leq 1$ over a training sample is a better option than a classifier learnt in the traditional supervised framework. These classifiers will have less probability of, in cases of skewed class distributions, behaving like a dummy classifier. Therefore, {\it optimising adequate scores\footnote{The interested reader can find, in the appendix, an example of how the decision regions for the different decision rules optimising the studied global performance scores are located in a ternary problem generated from a Gaussian mixture model.} to the class-imbalance extent may produce propitious classifiers reactive to the class-imbalance related problems.}

\section{Third Study: Bounds for the Competitiveness of a Classifier in the Class-Imbalance Scenario}\label{sec:st3}

Virtually, every paper proposing a class-imbalance solution has the exact same experimental setup \cite{Pra15}: A proposed method is compared against one or two previously proposed methods over a dozen or so datasets. Although this experimental setup is reasonable to support an argument that the new method is as good as or better than the state of the art, it still leaves many unanswered questions. Among them, we can find the question of whether the proposed solution is able to produce competitive classifiers: if the precedent solutions are not competitive (in terms of our definition), the proposal might be uncompetitive as well. For that reason, in this study, we focus on the last question of the introduction: {\it Can bounds guaranteeing the competitiveness of a classifier be provided for the value of certain adequate performance scores?} In particular, since they have gained notorious importance throughout the paper, we focus on the values of the unweighted H\"older means among the recalls. We refer to this value as $\mathcal{S}_{\Psi}^p$, where $p$ stands for the exponent of the mean used to assess a given classifier $\Psi$.

Next, in order to answer the question, we rewrite the definition of ``competitiveness of a classifier'' as {\it a competitive classifier is a classifier whose expected behaviour is superior to the expected behaviour of an already known baseline classifier}. Thus, by just instantiating  the expressions ``{\it expected behaviour}'' and ``{\it baseline classifier}'', and after some algebra, practical bounds for $\mathcal{S}_{\Psi}^p$ can be given in order to ensure the adequateness of $\Psi$ for the given unbalanced problem. Regarding the former concept, the expected behaviour cannot be determined by the direct use of the inherently maximised performance score due to the fact it would not be legitimate;  $\Psi$ might be favoured in the comparison. For that reason, we rely on common sense and on previous experience on the class-imbalance domain to define the term. Within this domain, it is interesting to obtain classifiers achieving great recalls for the minority classes while maintaining adequate recalls for the majority ones \cite{Gu09}, a fairly complicated task. Therefore, the competitiveness of the target classifier can be translated into obtaining greater or equal recalls for both the minority and majority classes than a baseline classifier for which prior knowledge is available. Finally, concerning the second term, we make use of the most-utilised base classifier for establishing the lower expected behaviour than a competitive classifier must obtain; the classifier representing the random guessing. Formally:

\begin{definition} The classifier representing the  random guessing of $K$ different classes is known as the uniformly random classifier (\textsf{RAND}) and it is given by
\begin{equation}
\label{RAND}
 \hat{c}_{R} = \textrm{Unif}\{1,K\}.
\end{equation}
\noindent where $\textrm{Unif}\{1,K\}$ is the discrete uniformly random function which assigns a categorical class $c_i$ to an example $\mathbf{x}$ with probability $1/K$. All the performance scores contemplated in Table \ref{scores} assign the same value to the goodness of this classifier: 
 \begin{eqnarray}
 \begin{tabular}{c}
$\mathcal{P}^i=\mathcal{R}^i = \frac{1}{K}, \forall i\text{, and }\mathcal{A}cc=\mathcal{A}=\mathcal{G}=\mathcal{H}=max=min=\frac{1}{K}.$\nonumber
\end{tabular}
\end{eqnarray}
In general, for the H\"{o}lder means among the recalls\footnote{This can be trivially proved by following a similar reasoning to Theorem 1 of \cite{Ort16} but using the H\"{o}lder mean equality instead.} (independently of the values of weights, $\boldsymbol{\zeta}$, and the exponent $p$), it holds that
$$M_p(\mathbf{R},\boldsymbol{\zeta})= \frac{1}{K}, \text{ where } \mathbf{R}=(\mathcal{R}^1,\ldots,\mathcal{R}^K).$$ 

\end{definition}

Now, by means of the previous instantiations, we can determine the {\it competitiveness} of $\Psi$ by just checking which values for an unweighted H\"{o}lder mean ensure that a recall of at least $1/K$ is obtained for all classes. Opportunely, this calculation can be easily performed in this kind of functions. Then, let $S^p_{\text{sup}}$ be defined as the lowest value for $\mathcal{S}_{\Psi}^p$ ensuring that the classifier $\Psi$ is certainly superior to \textsf{RAND}. This extreme situation takes place when just one recall is equal to $1/K$ and the remaining are all equal to $1$, i.e. $\exists!j(\mathcal{R}^j=1/K \wedge \forall i \neq j, \mathcal{R}^i=1)$. In this scenario, the slightest negative variation in the score could result in an incompetent classifier, i.e. a classifier $\Psi$ reporting a score of $\mathcal{S}_{\Psi}^p=S^p_{\text{sup}} - \epsilon$, where $\epsilon \to 0^{+}$, could indicate that $\exists i, \mathcal{R}^i<1/K$. On the contrary, a score of $S^p_{\text{sup}} + \epsilon$ will always indicate that $\Psi$ is a competitive classifier with $\forall i, \mathcal{R}^i>1/K$. Thus, by just substituting these recalls in the definition of a H\"{o}lder means -- eq. (\ref{holdermean}) --, we obtain
 \begin{eqnarray}
 \begin{tabular}{lll}
$S^p_{\text{sup}} =$ &$\displaystyle\Big(K^p+ K^{(p-1)} +1\Big)^{\frac{1}{p}}.$
\label{superiorlowerlimit}
\end{tabular}
\end{eqnarray}

Regarding the opposite scenario, the {\it incompetence} is determined by calculating which values of the H\"{o}lder means ensures that at least one recall is below the random guessing value, $1/K$. Here, let $S^p_{\text{inf}}$ be the strictly upper value for the score $\mathcal{S}_{\Psi}^p$ indicating that $\Psi$ is not competitive, i.e. it is inevitably inferior to \textsf{RAND}. For this case, we choose the scenario of obtaining a classifier behaving in the same manner as the \textsf{RAND}. Therefore, any negative variation in the score of $\Psi$ will indicate the incompetence of it. Analogously, any positive variation might indicate that the classifier is competent. In the latter assertion we cannot remove the model verb 'might' due to the fact that a higher score, but less than $S^p_{\text{sup}}$, is not sufficient for safeguarding better recalls than the \textsf{RAND} for all classes. Hence, the value for this upper value is:
\begin{eqnarray}
\begin{tabular}{cl}
$S^p_{\text{inf}} =$ &$\displaystyle\frac{1}{K}$ [= \textsf{RAND}]
\label{inferiorupperlimit}
\end{tabular}
\end{eqnarray}

\begin{figure}[t] 
\centering
\begin{subfigure}{.495\textwidth}
  \centering
  \includegraphics[width=\linewidth]{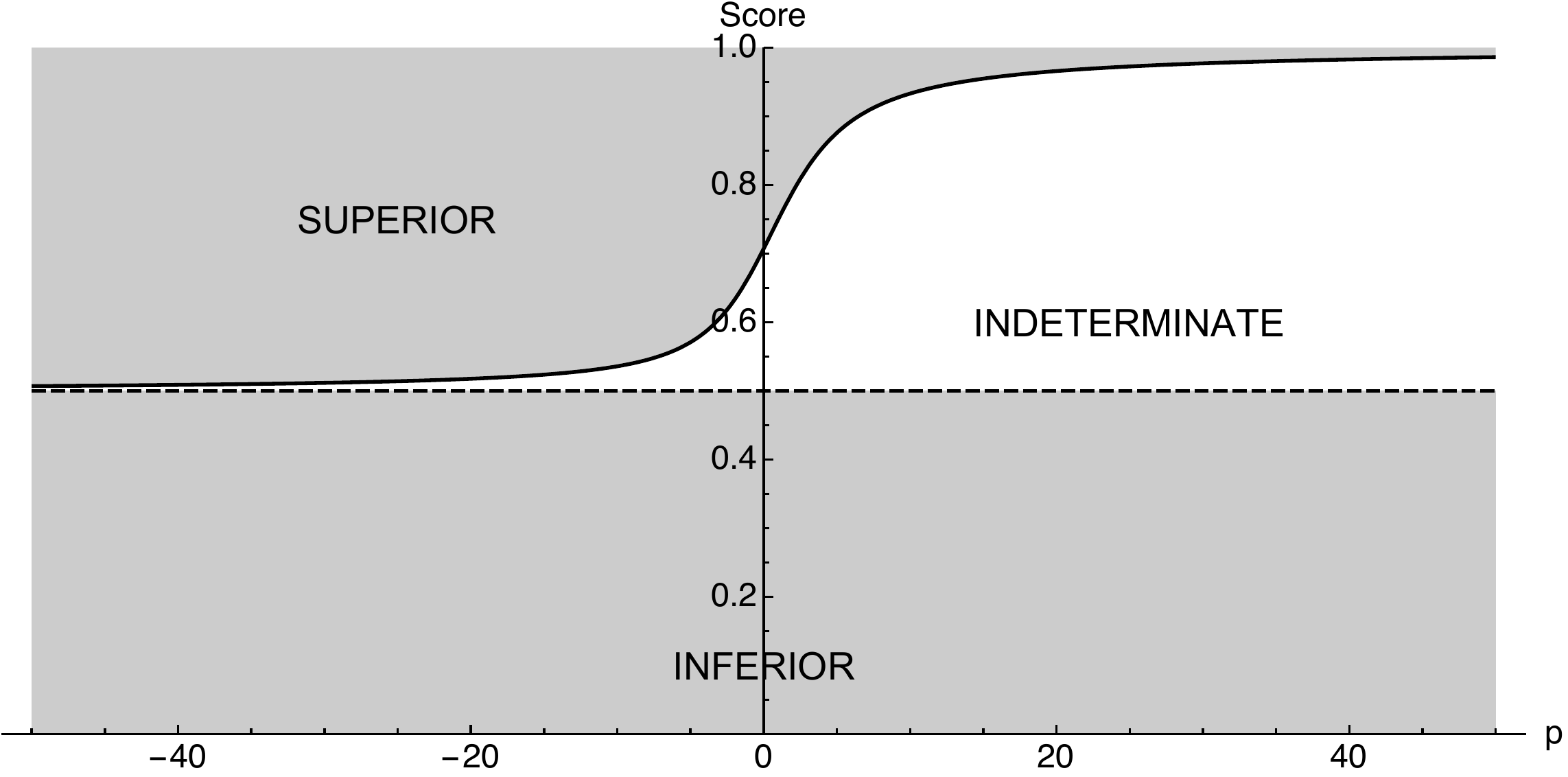}
  \caption{Binary classifier}
  \label{fig:binaryC}
\end{subfigure}
\begin{subfigure}{.495\textwidth}
  \centering
  \includegraphics[width=\linewidth]{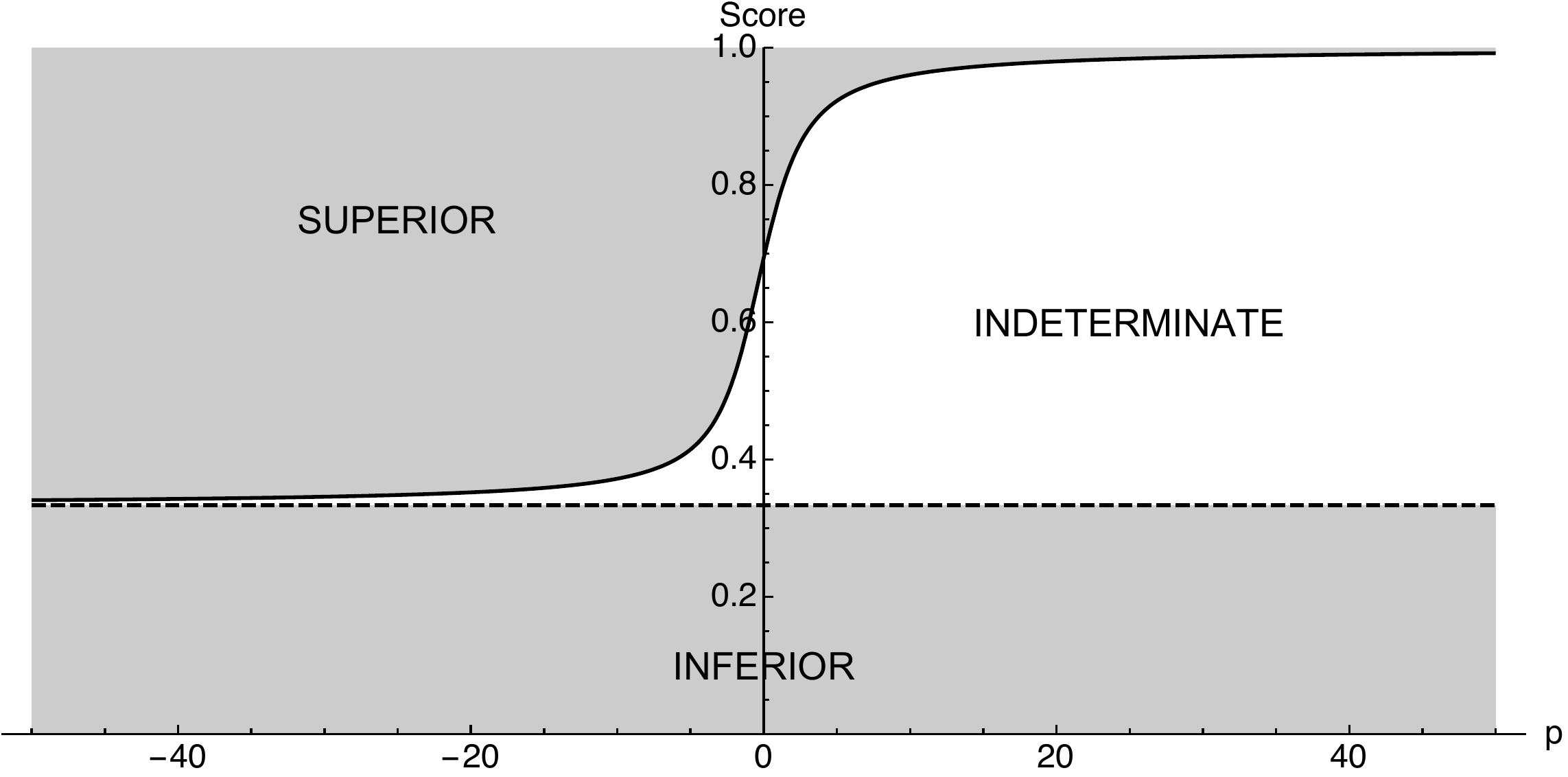}
  \caption{Ternary classifier}
  \label{fig:ternaryC}
\end{subfigure}
\caption{Limiting values for the unweighted H\"{o}lder means ensuring that a given classifier is superior/inferior to the random guessing.}
\label{fig:competitiveness}
\end{figure}

As a summary of the previous paragraphs, we plot Figure \ref{fig:competitiveness}. Here, the values for both eq. (\ref{superiorlowerlimit}), $S^p_{\text{sup}}$, and eq. (\ref{inferiorupperlimit}), $S^p_{\text{inf}}$, ($y$-axis) in the domain  $p \in [-50,50]$ ($x$-axis) are presented. Figure \ref{fig:binaryC} represents these values for binary problems and Figure \ref{fig:ternaryC} for ternary cases. In the figures, three different areas can be seen. They correspond to the cases for $\mathcal{S}_{\Psi}^p$ argued in this section:
\begin{eqnarray}
\begin{tabular}{rc}
if $\mathcal{S}_{\Psi}^p \in [S^p_{\text{sup}},1]:$ 			&SUPERIOR. $\Psi$ is a competitive classifier for the problem\\
if $\mathcal{S}_{\Psi}^p \in [0,S^p_{\text{inf}}):$				&INFERIOR. $\Psi$ is incompetent to solve the problem\\
if $\mathcal{S}_{\Psi}^p \in [S^p_{\text{inf}},S^p_{\text{sup}}):$	&INDETERMINATE. The competitiveness of $\Psi$ cannot be checked with just $\mathcal{S}_{\Psi}^p$\nonumber
\end{tabular}
\end{eqnarray}

In conclusion, it can be seen that the thesis highlighted throughout the whole paper is supported once more; the adequateness of the unweighted H\"{o}lder mean among the recalls with $p \leq 1$ to assess the goodness of a classifier in class-imbalance domains. Not only do they focus on the behaviour of all recalls, independently of their class probability values, but also the resulting score value is acutely informative in terms of competitiveness of the classifier. Here again, $min$ is the most restrictive and the most informative score: a value above $1/K$ indicates that the classifier is competitive. A value below denotes the opposite.

\section{Summary}\label{sec:sum}

Most of the existing learning algorithms are designed to asymptotically converge to the \textsf{BDR} by minimising the $0$-$1$ loss or, what is equivalent, maximising the classification accuracy. Unfortunately, when dealing with unbalanced problems, the classification accuracy is not an adequate performance score due to the fact that the underrepresented classes have very little impact on the measure when they are compared to the overrepresented classes. Therefore, it is imperative to define more adequate learning systems which can effectively deal with skewed class distributions. Thus, in order to shorten the distance towards the previous ideal, in this paper, an exhaustive analysis of a set of numerical performance scores is carried out, not only to be able to determine their adequateness to assess the goodness of a classifier in this complex scenario, but also to be capable of studying whether they are suited for being used to produce more competitive learning systems for unbalanced problems. Specifically, we focus on performing an exhaustive analysis of the most-common numerical performance scores used in the class-imbalance domain which can also be represented as H\"{o}lder means \cite{Bul03} among the recalls of all the classes. This set groups well-known performance scores such as accuracy, a-mean, g-mean etc. Since many interdependent factors may hinder the discerning skill of the classifier and, therefore, vary the value of the score, we develop a novel classification framework which allows us to marginalise out the class-imbalance component from the rest of the factors. As a result, the influence of the class-imbalance extent on the performance score using the long studied \textsf{BDR} as a classifier can be measured in isolation. With this study, we provide answers to the following questions:

{\bfseries Which performance scores are adequate to determine the competitiveness of a classifier in unbalanced domains?} {\it The performance scores which are unweighted H\"{o}lder means with $p \leq 1$ (a-mean, g-mean, h-mean, etc.) among the recalls are the most appropriate to evaluate the competitiveness of classifiers in unbalanced problems. In these cases, misclassifying the least probable classes is highly penalised.}

{\bfseries Which performance scores are maximised in the most common learning solutions designed to deal with skewed classes?} {\it Most of the learning solutions proposed in the unbalanced literature are designed to maximise the a-mean (which is the unweighted H\"{o}lder mean with $p = 1$) due to the fact that they asymptotically converge to the \textsf{BDR} for equiprobable classes.}

{\bfseries Can bounds guaranteeing the competitiveness of a classifier be provided for certain adequate performance scores?} {\it Yes, we finalise the paper by providing two different practical bounds for the performance scores expressed as unweighted H\"{o}lder means among the recalls with $p  \in \mathds{R} \cup \{+\infty,-\infty\}$; a bound for the lowest value of the performance score ensuring a competitive solution for unbalanced problems and a bound for the highest value of the score indicating an incompetent solution.}

Concerning future work, since most of the research efforts in unbalanced learning are specific algorithms and/or case studies, we mainly focus on the {\it potential theoretical research lines} due to the fact that, so far, only a limited amount of theoretical understanding on the principles and consequences of the class-imbalance problem have been addressed \cite{He09}. This study can easily be extended in several ways: first, as we only deal with numerical performance scores, similar studies can be proposed using other interesting kinds of scores; graphical performance scores \cite{San15}, adjusted (to the class-imbalance domain) performance scores \cite{Lop13}, or even, the whole confusion matrix \cite{Koc13}. Secondly, several other classifiers rather than the \textsf{BDR} can be used as a representative classifier in the study. Thirdly, analytical solutions can be found for either the influence function for each score or for the decision rules optimising H\"{o}lder means with $p < 1$. Fourthly, here, we exclusively deal with the supervised learning framework, however, other types of learning scenarios such as semi-supervised learning \cite{Ort16} can be studied. Lastly, we think that research on how the degree of imbalance of the real-world multi-class problems can be measured could also be interesting. While in binary problems, it can be measured by just using the imbalance-ratio \cite{Lop13}, this measure is insufficient to capture the complexity of a $K > 2$ class distribution vector $\boldsymbol{\eta}$. An informative single value measure will ease the comparison among class-imbalance problems with a different number of classes.

 \section*{Acknowledgments}
 This work is partially supported by the Basque Government (IT609-13 and Elkartek BID3A) and the Spanish Ministry of Economy and Competitiveness (TIN2013-41272P). Moreover, Jonathan Ortigosa-Hern\'andez is partly financed by the Ministry of Science and Innovation (MEC-FPU AP2008-00766) and Jose A. Lozano by both the Basque Government (BERC program 2014-2017) and the Spanish Ministry of Economy and Competitiveness (Severo Ochoa Program SEV-2013-0323).
 
 \bibliographystyle{plain}
\bibliography{mylib}{}

\newpage
\appendix
\section{Case Study: How are the decision regions for the classifiers optimising the studied scores located in a ternary problem?}

With the purpose of exposing the asymptotical behaviour of the classifiers maximising the numerical performance scores studied in the manuscript, here, we set up a controlled example showing how the optimal classifiers for those scores split the feature space into different decision regions. As we assume the generative model to be known, here, each optimal classifier will be referred to as the decision rule optimising a certain numerical performance score.

Deliberately, we define the example in terms of the framework of the manuscript: Let the example $\gamma_3$ be a ternary classification problem with a generative model composed of a univariate Gaussian mixture model represented by the following joint probability density function
\begin{equation}
\label{jointgenmodel}
f(\mathbf{x},c|\boldsymbol{\theta}) = \sum_{i=1}^{K} \eta_if(\mathbf{x}|c_i,\boldsymbol{\theta}_i){{\mathbb 1}(c= c_i)}.
\end{equation}
Note that the previous equation uses $f$ instead of $\rho$ (as in the manuscript) for the distribution of the feature space. This is due to the fact that, here, we assume the features to be continuous. Let ${\mathbb 1}(c= c_i)$ stand for the indicator function, i.e. it is equal to $1$ if  $c= c_i$ and $0$ otherwise. Then, let us instantiate each mixture component in this example as a univariate Gaussian distribution with parameters
\begin{eqnarray}
\begin{tabular}{ccc}
$f(x|c_1,\boldsymbol{\theta}_1) \sim N(3,0.5)$ &$f(x|c_2,\boldsymbol{\theta}_2) \sim N(5,0.5)$ &$f(x|c_3,\boldsymbol{\theta}_3) \sim N(6,0.5)$\nonumber
\end{tabular}
\end{eqnarray}
\noindent and let the class distribution be set as $\boldsymbol{\eta}=(0.6,0.3,0.1)$. Then, Figure \ref{fig1} shows the generative model of the proposed example. Whilst Figure \ref{fig:unbalanceP} plots the density distribution for each class multiplied by its class probability ($\eta_if(\mathbf{x}|c_i,\boldsymbol{\theta}_i)$), Figure \ref{fig:balanceP} presents only the mixture density distribution of the classes ($\eta_if(\mathbf{x}|c_i,\boldsymbol{\theta}_i)$). The main objective of presenting two views of the same model is two-fold; (i) the influence of the class distribution on the intricacy of the generative model can be perceived at a glance, and (ii) each view is a key factor in showing the behaviour of the two main decision rules studied in the manuscript; the \textsf{BDR} (which optimises $\mathcal{A}cc$) and the \textsf{EDR} (which is optimal for $\mathcal{A}$). 

\begin{figure}[h]
        \begin{subfigure}{\textwidth}
        \centering
                \includegraphics[width=\textwidth]{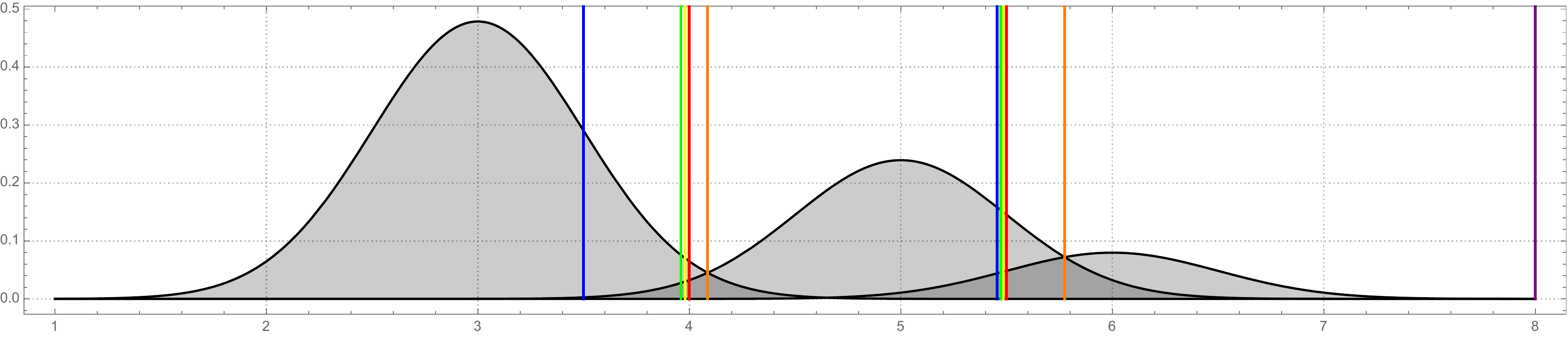}
                \caption{The generative model of the example as it is defined, i.e. the density distribution for each class is multiplied by its class probability.}
                \label{fig:unbalanceP}
        \end{subfigure}
        \begin{subfigure}{\textwidth}
         \centering
                \includegraphics[width=\textwidth]{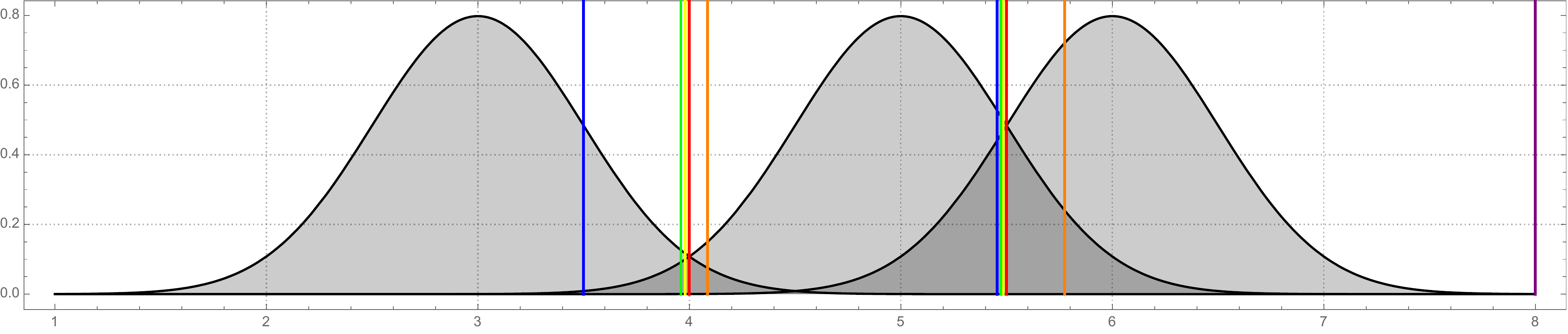}
                \caption{The density distribution of each categorical class.}
                \label{fig:balanceP}
        \end{subfigure}
        \caption{The decision region limits for the decision rules optimising the studied scores: \textsf{EDR} (red), \textsf{BDR} (orange), $\mathcal{G}$\textsf{-DR} (yellow), $\mathcal{H}$\textsf{-DR} (green), $\min$\textsf{-DR} (blue) and $\max$\textsf{-DR} (purple).}\label{fig1}
\end{figure}

Moreover, the figures also display the limits of the decision rules optimising all the numerical performance scores studied in the paper. Here, the decision regions are represented by vertical lines dividing the real number line; the first line to the left divides $\Omega_1$ and $\Omega_2$, and the second line separates $\Omega_2$ and $\Omega_3$, i.e. $\Omega_1=(-\infty, \text{left line}]$, $\Omega_2=(\text{left line}, \text{right line}]$, and $\Omega_3=(\text{right line}, \infty)$. In order to calculate these regions, we directly applied the \textsf{BDR} and the \textsf{EDR} to find their optimal decision regions for $\mathcal{A}cc$ and $\mathcal{A}$, respectively, and an exhaustive search to find the optimal classifiers and their decision regions for the performance scores which have an unknown (to us) decision rule; $\mathcal{G}$, $\mathcal{H}$, $\max$, and $\min$. Since these unknown decision rules have not been named in the manuscript, henceforth, we refer to the unnamed decision rule optimising a score $S$ as $S$\textsf{-DR}, e.g. $\mathcal{G}$\textsf{-DR} will stand for the decision rule optimising $\mathcal{G}$. In the figures, the limiting values are coloured as follows: the red lines represents the limits for the \textsf{EDR}, the orange is for the \textsf{BDR}, yellow is used for $\mathcal{G}$\textsf{-DR}, green for $\mathcal{H}$\textsf{-DR}, and the blue and purple colours are for the extreme rules; $\min$\textsf{-DR} and $\max$\textsf{-DR}, respectively.

By comparing both figures, it can be easily seen that the \textsf{BDR} uses both the class distribution and the feature distribution to split the real number line into regions, and that the \textsf{EDR} relays just on the feature distribution to accomplish the same task. While the \textsf{BDR} cuts the feature space in the intersection points of Figure \ref{fig:unbalanceP}, the \textsf{EDR} relays on the intersections of Figure \ref{fig:balanceP}. Next, note that the insensitive $\max$\textsf{-DR} has only one limiting value (we set this value at $+\infty$, i.e. $\Omega_1 = (-\infty,\infty)$ and $\Omega_2=\Omega_3=\emptyset$). This is due to the fact that the optimality can be easily achieved with the dummy classifier which classifies all instances as just one class. Therefore, an optimal classifier for that rule will be the one that assigns the whole real number line to a determined decision region and that leaves the rest of the decision region empty. Then, regarding the $\min$\textsf{-DR}, it can be seen that it seeks that the area of each class in its decision region is equal to any area of any other class in its decision region so that the minimum recall will be the maximum, i.e.

$$\forall 1\leq i,j \leq 3,  \int\limits_{\Omega_i}f(\mathbf{x}|c_i,\boldsymbol{\theta}_i)d\mathbf{x}=\int\limits_{\Omega_j}f(\mathbf{x}|c_j,\boldsymbol{\theta}_j)d\mathbf{x}.$$

\noindent Finally, the decision regions optimising the other adequate numerical performance scores, i.e. $\mathcal{G}$\textsf{-DR} and $\mathcal{H}$\textsf{-DR} always seem to have limiting values bounded by the \textsf{EDR} and the $\min$\textsf{-DR}. It may be an interesting potential research line to determine how the limiting values of the decision region vary on the class-overlapping for the unweighted H\"{o}lder means ($p \leq 1$). In the figures, it can be seen that while $\Omega_3$ has a similar size for these scores, $\Omega_1$ and $\Omega_2$ fluctuate considerably with $p$.

In conclusion, with this example we reinforce one of the theses of the manuscript. Although, graphically, both the \textsf{BDR} and the \textsf{EDR} seem to be more compelling for splitting the $x$-axis in the intersection points, the use of classifiers maximising unweighted H\"{o}lder means with $p < 1$ may be of interest as we have proved that they are more informative on the class-imbalance problems. More competitive classifiers may be proposed by adding these numerical performance scores to the definition of the forthcoming learning algorithms for unbalanced problems. These adequate classifiers tend to split the $x$-axis so that the decision region of each class shares the same area in terms of just the density function. Therefore, they will be more reactive to the potential problems derived from the skewness of the class distribution.

\end{document}